\documentclass[11pt]{article}

\usepackage[margin = 1in]{geometry}

\usepackage{authblk}

\usepackage{microtype}
\usepackage{graphicx}
\usepackage{subfigure}
\usepackage{booktabs} 
\usepackage{natbib}
\usepackage{hyperref}
\usepackage{multicol}
\usepackage{multirow}  
\usepackage{colortbl}  
\usepackage{graphicx}
\usepackage{booktabs} 
\usepackage{pifont}   
\usepackage{xcolor} 
\newcommand{\cmark}{\textcolor{green}{\ding{51}}}  
\newcommand{\xmark}{\textcolor{red}{\ding{55}}} 
\newcommand{\E}{\mathbb{E}}

\usepackage{algorithm,algorithmic}

\usepackage{amsmath}
\usepackage{amssymb}
\usepackage{mathtools}
\usepackage{amsthm}

\usepackage{wrapfig}

\usepackage[capitalize,noabbrev]{cleveref}

\theoremstyle{plain}
\newtheorem{theorem}{Theorem}[section]

\newtheorem{lemma}[theorem]{Lemma}

\theoremstyle{definition}

\theoremstyle{remark}

\newcommand{\bx}{\mathbf{x}}
\usepackage[textsize=tiny]{todonotes}

\title{Alignment of Diffusion Model and Flow Matching \\ for Text-to-Image Generation}

\author[1]{Yidong~Ouyang}
\author[2]{Liyan~Xie}
\author[3]{Hongyuan~Zha}
\author[1]{Guang~Cheng}

\affil[1]{\small Department of Statistics,
  University of California, Los Angeles
}

\affil[2]{\small
Department of Industrial and Systems Engineering, University of Minneasota}

\affil[3]{\small School of Data Science, The Chinese University of Hong Kong, Shenzhen}

\date{}

\begin{document}

\maketitle

\vspace{-0.2in}

\begin{abstract}
Diffusion models and flow matching have demonstrated remarkable success in text-to-image generation. While many existing alignment methods primarily focus on fine-tuning pre-trained generative models to maximize a given reward function, these approaches require extensive computational resources and may not generalize well across different objectives. In this work, we propose a novel alignment framework by leveraging the underlying nature of the alignment problem---sampling from reward-weighted distributions---and show that it applies to both diffusion models (via score guidance) and flow matching models (via velocity guidance). The score function (velocity field) required for the reward-weighted distribution can be decomposed into the pre-trained score (velocity field) plus a conditional expectation of the reward. For the alignment on the diffusion model, we identify a fundamental challenge: the adversarial nature of the guidance term can introduce undesirable artifacts in the generated images. Therefore, we propose a finetuning-free framework that trains a guidance network to estimate the conditional expectation of the reward. We achieve comparable performance to finetuning-based models with one-step generation with at least a 60\% reduction in computational cost. For the alignment on flow matching, we propose a training-free framework that improves the generation quality without additional computational cost.



\end{abstract}

\section{Introduction}
\label{sec:intro}
Diffusion models and flow matching have achieved impressive performance in text-to-image generation, as demonstrated by state-of-the-art models such as Imagen \citep{Saharia2022PhotorealisticTD}, DALL-E 3 \citep{BetkerImprovingIG}, and Stable Diffusion \citep{Rombach2021HighResolutionIS}. These models have been proven capable of generating high-quality, creative images even from novel and complex text prompts.

Inspired by Reinforcement Learning from Human Feedback (RLHF) \citep{Ouyang2022TrainingLM}, many alignment approaches leverage preference pairs to fine-tune models for generating samples that align with task-specific objectives. RLHF-type methods typically learn a reward function and use the policy gradients to update the model \citep{Lee2023AligningTM, Fan2023DPOKRL, Black2023TrainingDM, Clark2023DirectlyFD,MaxMin-RLHF,Jaques2016SequenceTC, Liu2025FlowGRPOTF, Jaques2020HumancentricDT}. On the other hand, Direct Preference Optimization (DPO)-type methods directly optimize the model to adhere to human preferences, without requiring explicit reward modeling or reinforcement learning \citep{Rafailov2023DirectPO, Wallace2023DiffusionMA, Yang2023UsingHF, Liang2024StepawarePO, Yang2024ADR}.

Despite their effectiveness, these approaches require modifying model parameters through fine-tuning, which comes with several potential limitations. For example, fine-tuning for new reward functions is computationally expensive and often requires carefully designed training strategies; otherwise, optimizing on a limited set of input prompts can limit generalization to unseen prompts. More importantly, existing fine-tuning approaches do not fully exploit the structure of the alignment problem. Instead, they typically apply Low-Rank Adaptation (LoRA) to optimize model weights for a specific reward function \cite{Hu2021LoRALA}, which may not be the most efficient strategy.


In contrast, plug-and-play alignment methods integrate new objectives without modifying the underlying model parameters, significantly reducing computational costs while adapting flexibly to different reward functions. In this paper, we cast alignment for both diffusion models and flow matching models as a unified sampling problem from reward-weighted distributions. Under this formulation, the key object needed for sampling---the new score function for diffusion or the new velocity field for flow matching---can be written as the corresponding pre-trained quantity plus an additional reward-driven guidance term.

For diffusion models, the guidance term admits an adversarial nature flaw, i.e., the guidance is the gradient of the log conditional expectation of the reward. Directly using the gradient of high dimensional input space can lead to undesirable artifacts in the generated images. To address this issue, we propose a finetuning-free alignment method that trains a lightweight guidance network to estimate the required conditional expectation, together with a regularization strategy that stabilizes the guidance landscape. We evaluate the effectiveness of the proposed method on four widely used criteria for text-to-image generation, and the proposed method achieves comparable performance to finetuning-based models in one-step generation while reducing computational cost by at least 60\%.

For flow matching, we derive the exact form of velocity guidance and further propose a training-free estimator that directly computes the guidance term without additional model fine-tuning. The proposed method improves the generation quality without
additional training overhead.

\begin{figure*}[t!]
\centering
  \includegraphics[width=0.95\textwidth]{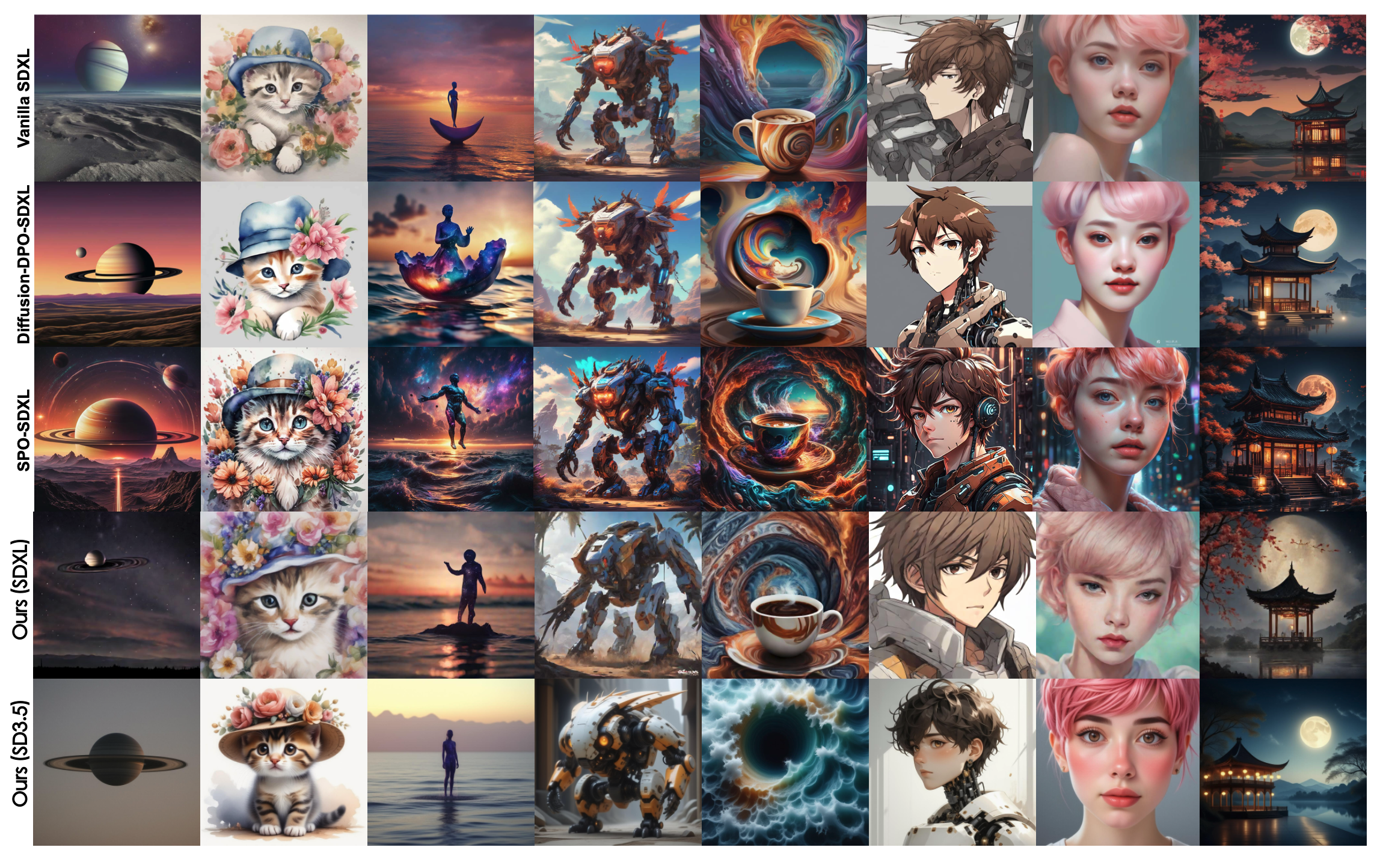}
  \caption{Qualitative comparison with Vanilla SDXL, Diffusion-DPO, and SPO. Our method achieves better aesthetic quality and stronger alignment with the text prompt. Prompts are provided in the Appendix \ref{ap:prompt}.}
  \label{fig:f1}
\end{figure*}



\section{Preliminaries}
\label{sec:preli}
In this section, we begin with a brief overview of diffusion models and flow matching in Section \ref{sec:latant} and Section \ref{sec:flow}. We then review existing techniques for aligning pre-trained models with human preferences, and decompose the alignment procedure into two key components: reward learning for modeling human preferences in Section \ref{sec:reward} and the alignment methods in Section \ref{sec:alignment}.

\subsection{Diffusion Models}\label{sec:latant}

Diffusion generative models are characterized by their forward and backward processes \cite{Ho2020DDPM, Song2021ScoreBasedGM}. The forward process gradually injects Gaussian noise into samples $\mathbf{x}_0$ from the data distribution following the stochastic differential equation:
\begin{equation}\label{eq:forward}
    \mathrm{d} \mathbf{x}_t=\mathbf{f}(\mathbf{x}_t, t) \mathrm{d} t+g(t) \mathrm{d} \mathbf{w}, \ t\in[0,T],
\end{equation}
where $\mathbf{w}$ is the standard Brownian motion, $\mathbf{f}(\cdot, t): \mathbb{R}^d \rightarrow \mathbb{R}^d$ is a drift coefficient, and $g(\cdot): \mathbb{R} \rightarrow \mathbb{R}$ is a  diffusion coefficient. We use $p_t(\mathbf{x})$ to denote the marginal distribution of $\mathbf{x}_t$ at time $t$. And we can use the time reversal of \eqref{eq:forward} for generation, which admits the following form \citep{Anderson1982ReversetimeDE}:
\begin{equation}\label{eq:backward}
\mathrm{d} \mathbf{x}_t=\left[\mathbf{f}(\mathbf{x}_t, t)-g(t)^2 \nabla_{\mathbf{x}} \log p_t(\mathbf{x})\right] \mathrm{d} t+g(t) \mathrm{d} \overline{\mathbf{w}},
\end{equation}
where $\overline{\mathbf{w}}$ is a standard Brownian motion when time flows backwards from $T$ to 0, and $\mathrm{d}t$ is an infinitesimal negative time step. The score function of each marginal distribution $\nabla_{\mathbf{x}} \log p_t(\mathbf{x})$ needs to be estimated by the following score matching objective: \begin{equation}\label{eq:dsm}
 \underset{\boldsymbol{\theta}}{\min} \ \mathbb{E}_t\left\{\lambda(t) \mathbb{E}_{p_t(\mathbf{x}_t)} \left[\left\|\mathbf{s}_{\boldsymbol{\theta}}(\mathbf{x}_t, t)-
\nabla_{\mathbf{x}_t} \log p_t(\mathbf{x}_t)\right\|_2^2\right]\right\},
\end{equation}
where $\lambda(t):[0, T] \rightarrow \mathbb{R}_{>0}$ is a positive weighting function, $t$ is uniformly sampled over $[0, T]$. The latent diffusion model \citep{Rombach2021HighResolutionIS, Podell2023SDXLIL} further extends diffusion models to text-to-image generation. They use an image encoder $\mathcal{E}$ that maps $\mathbf{x}$ into a latent representation and use a text encoder $\tau$ that maps the prompts $y$ into an embedding as the condition.


\subsection{Flow Matching}\label{sec:flow}

Flow matching models learn a time-dependent velocity field that transports a simple base distribution to the data distribution \citep{Lipman2022FlowMF} via the probability flow ODE
\begin{equation}\label{eq:fm_ode}
\frac{\mathrm{d}\mathbf{x}_t}{\mathrm{d}t} = \mathbf{v}_{\boldsymbol{\phi}}(\mathbf{x}_t, y, t), \quad t \in [0,1], \notag
\end{equation}
where $\mathbf{v}_{\boldsymbol{\phi}}:\mathbb{R}^d \to \mathbb{R}^d$ is a learnable velocity field. Unlike diffusion models, we denote $\mathbf{x}_0$ as a sample from a base distribution (e.g., standard Gaussian) and $\mathbf{x}_1$ as a sample from the data distribution.

The flow matching objective minimizes the discrepancy between the model vector field and the oracle velocity field along the trajectory:
\begin{equation}
\mathcal{L}(\theta) = \mathbb{E}_{\mathbf{x}_t \sim p_t(\mathbf{x}_t|\mathbf{x}_0,\mathbf{x}_1), t \sim \mathcal{U}[0,1]}
\Big[ \big\| \mathbf{v}_{\boldsymbol{\phi}}(\mathbf{x}_t, t) - \mathbf{v}(\mathbf{x}_t, y, t) \big\|^2_2 \Big],
\end{equation}
where $\mathbf{x}_t$ is a linear interpolation between $\mathbf{x}_0$ and $\mathbf{x}_1$, 
and $\mathbf{v}(\mathbf{x}_t, y, t)$ is the oracle velocity field.

\subsection{Reward Learning}\label{sec:reward}

The Bradley-Terry (BT) model \citep{Bradley1952RankAO}, and the more general Plackett-Luce ranking models \citep{Plackett1975TheAO, Luce1979IndividualCB}, are commonly used to model preferences. Given a prompt $y$ and a pair of responses $\mathbf{x}_w \succ \mathbf{x}_l \mid y$, where $\mathbf{x}_w$ denotes the winning response and $\mathbf{x}_l$ denotes the losing response under the preference of humans. The BT model depicts the preference distribution as $$
p\left(\mathbf{x}_w \succ \mathbf{x}_l \mid y\right)=\frac{\exp \left(r\left(\mathbf{x}_w, y\right)\right)}{\exp \left(r\left(\mathbf{x}_w, y\right)\right)+\exp \left(r\left(\mathbf{x}_l, y\right)\right)},
$$
where $r(\mathbf{x}, y)$ denotes the reward model and can be learned by the following maximum likelihood objective, 
\begin{equation}
\label{eq:learn_reward}
\underset{\boldsymbol{\phi}}{\min} -\mathbb{E}_{\left(\mathbf{x}_w, \mathbf{x}_l, y\right) \sim \mathcal{D}}\left[\log \sigma\left(r\left(\mathbf{x}_w, y\right)-r\left(\mathbf{x}_l, y\right)\right)\right],
\end{equation}
where $\mathcal{D}=\{\mathbf{x}_w^{(i)}, \mathbf{x}_l^{(i)}, y^{(i)}\}_{i=1}^N$ is the offline preference dataset and $\sigma$ denotes the logistic function.


\subsection{Alignment}\label{sec:alignment}
Building on the success of alignment techniques for finetuning large pre-trained models, many studies have explored aligning diffusion models and flow matching with human preferences. We review these approaches in the following.
\vspace{-2pt}
\paragraph{Reinforcement Learning from Human Feedback.}
This type of works \citep{Lee2023AligningTM, Xu2023ImageRewardLA, Fan2023DPOKRL, Black2023TrainingDM, Clark2023DirectlyFD} finetune the pre-trained model $\pi_{\text{ref}}$ by policy gradient objectives \citep{Jaques2016SequenceTC, Jaques2020HumancentricDT}. In particular, the fine-tuned model $\pi_\theta$ is obtained by solving the following optimization problem:
\begin{equation}\label{eq:max_under_kl}
\max _{\pi_\theta} \  \mathbb{E}_{y \sim \mathcal{D}_{\text{prompt}}, \mathbf{x} \sim \pi_\theta(\mathbf{x} \mid y)}\left[r(\mathbf{x}, y)\right] 
-\beta \mathbb{D}_{\mathrm{KL}}\left[\pi_\theta(\mathbf{x} \mid y) \| \pi_{\text{ref}}(\mathbf{x} \mid y)\right],    
\end{equation}
where $\mathcal{D}_{\text{prompt}}$ denotes the prompt dataset. This type of method requires a pre-trained reward function for policy optimization \citep{Schulman2017ProximalPO}.

\paragraph{Direct Preference Optimization.}
Rafailov et al. \citep{Rafailov2023DirectPO} propose not to explicitly learn the reward function. They start with the analytic solution of \eqref{eq:max_under_kl} as the energy-guided form,
\begin{equation}
\label{eq:energy_weighted}
\pi_{\text{r}}(\mathbf{x} \mid y)=\frac{1}{Z(y)} \pi_{\mathrm{ref}}(\mathbf{x} \mid y) \exp \left(\frac{1}{\beta} r(\mathbf{x}, y)\right),
\end{equation}
where $Z(y)=\int \pi_{\text {ref }}(\mathbf{x} \mid y) \exp \left(\frac{1}{\beta} r(\mathbf{x}, y)\right) \mathrm{d} \mathbf{x}$ is the partition function. Therefore, they can reparameterize the reward function $r(\mathbf{x}, y)$ as 
\begin{equation}
\label{eq:reward}
r(\mathbf{x}, y)=\beta \log \frac{\pi_{\text{r}}(\mathbf{x} \mid y)}{\pi_{\text {ref }}(\mathbf{x} \mid y)}+\beta \log Z(y).
\end{equation}
Plugging \eqref{eq:reward} into \eqref{eq:learn_reward} yields the objective of DPO-type methods:
\begin{equation}\label{eq:DPO}
\min -\mathbb{E}_{\left(\mathbf{x}_w, \mathbf{x}_l, y\right) \sim \mathcal{D}} \left[\log \sigma\left(\beta \log \frac{\pi_\theta\left(\mathbf{x}_w \mid y\right)}{\pi_{\mathrm{ref}}\left(\mathbf{x}_w \mid y\right)}-\beta \log \frac{\pi_\theta\left(\mathbf{x}_l \mid y\right)}{\pi_{\mathrm{ref}}\left(\mathbf{x}_l \mid y\right)}\right)\right].
\end{equation}

\section{Proposed Framework for Diffusion Model}
\label{sec:method}
In this section, we introduce a finetuning-free frameworks to directly sample from the reward-guided distribution for diffusion models. We begin by introducing the methodology formulation in Section \ref{sec:guidance}. We then provide an in-depth analysis of several vanilla methods for calculating the guidance in Section \ref{sec:vanilla}. We highlight that these vanilla guidance methods exhibit adversarial guidance, which generates undesirable artifacts and worsens performance, particularly in text-to-image generation. Then, we present an enhanced method in Section \ref{sec:FFG} that alleviates the problem. 

\subsection{Methodology Formulation}\label{sec:guidance}
Inspired by previous works from transfer learning \citep{Ouyang2024TransferLF}, we consider preference learning in terms of transferring a pre-trained diffusion model to adapt to the given preference data. To this end, we propose a finetuning-free alignment method for the diffusion models. Instead of using RLHF-type (like \eqref{eq:max_under_kl}) or DPO-type (like \eqref{eq:DPO}) alignments, we propose to directly sample from the reward-weighted distribution $\pi_{\text{r}}(\mathbf{x}|y)$ in \eqref{eq:energy_weighted} leveraging the relationships between score functions in the following Theorem. 

\begin{figure*}[ht!]
\centering
  \includegraphics[width=0.7\textwidth]{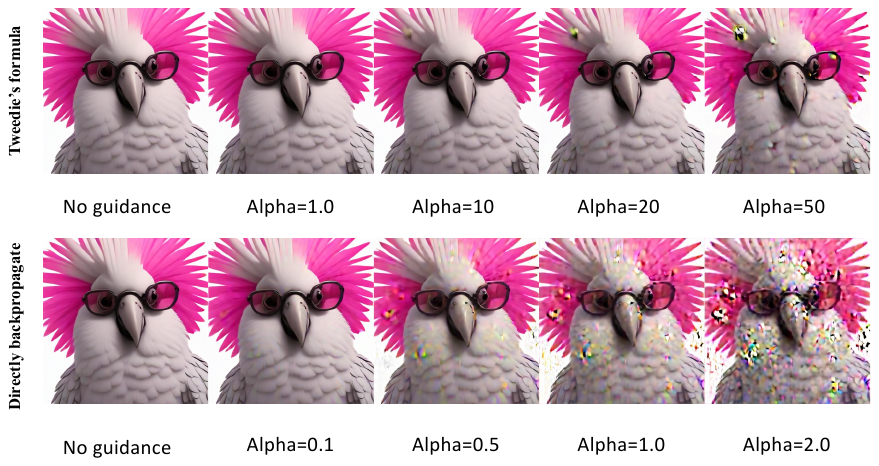}
  \caption{Illustration of the Adversarial Nature of Guidance. When the strength of the guidance is too small, there is little difference between the generated images with or without guidance. However, as the magnitude of the guidance increases (from left to right), undesirable artifacts become more pronounced. The prompt is "A 3D Rendering of a cockatoo wearing sunglasses. The sunglasses have a deep black frame with bright pink lenses. Fashion photography, volumetric lighting, CG rendering".}
  \label{fig:adv}
\end{figure*}

\begin{theorem}\label{thm:IS_guidance}
Let the conditional distribution of reference diffusion model $\pi_{\text{ref}}(\bx|y)$ be denoted as distribution $p$ and the reward-weighted distribution $\pi_{\text{r}}(\bx|y)$ defined in \eqref{eq:energy_weighted} as distribution $q$. Under some mild assumption of the forward noising process detailed in Appendix \ref{app:proofs}, let $\boldsymbol{\phi}^*$ be the optimal solution for the conditional diffusion model trained on target domain $q(\mathbf{x}_0, y)$, i.e.,
\begin{equation}
\boldsymbol{\phi}^* = \underset{\boldsymbol{\phi}}{\arg \min } ~\mathbb{E}_t\Big\{ \lambda(t) 
\mathbb{E}_{q_t(\mathbf{x}_t, y)} \Big[ \Big\| 
\mathbf{s}_{\boldsymbol{\phi}}(\mathbf{x}_t, y, t) - \nabla_{\mathbf{x}_t} \log q_t(\mathbf{x}_t | y)
\Big\|_2^2 \Big] \Big\},     \notag
\end{equation}
then 
\begin{equation}\label{eq:reward_guided}
\mathbf{s}_{\boldsymbol{\phi}^*}(\mathbf{x}_t, y, t) =\underbrace{\nabla_{\mathbf{x}_t} \log p_t(\mathbf{x}_t |  y)}_{\substack{\text {pre-trained conditional model}\\ \text{on source}}} + \underbrace{\nabla_{\mathbf{x}_t} \log \mathbb{E}_{p(\mathbf{x}_0|\mathbf{x}_t, y)}\left[ \exp( \frac{1}{\beta}r(\mathbf{x}_0, y))\right]}_{\text {conditional guidance}}.     
\end{equation}
\end{theorem}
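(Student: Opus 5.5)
The proof has two parts. First, the minimizer of the weighted score-matching objective is the true target score. Second, that target score splits into the source score plus a guidance term. The key assumption, which I take to be the ``mild assumption'' in Appendix~\ref{app:proofs}, is that the forward noising kernel is the same for source and target: $q(\mathbf{x}_t|\mathbf{x}_0,y)=p(\mathbf{x}_t|\mathbf{x}_0,y)=p(\mathbf{x}_t|\mathbf{x}_0)$. Since the forward SDE \eqref{eq:forward} does not depend on the data distribution, this holds automatically. I also need enough regularity to differentiate under the integral sign and to ensure $\mathbb{E}_{p}[\exp(r/\beta)]<\infty$.

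\textbf{Step 1 (optimizer identification).} The objective is a $\lambda(t)$-weighted $L^2(q_t(\mathbf{x}_t,y))$ distance between $\mathbf{s}_{\boldsymbol{\phi}}$ and $\nabla_{\mathbf{x}_t}\log q_t(\mathbf{x}_t|y)$, and $\lambda>0$. Assuming the network class is rich enough, the minimizer satisfies $\mathbf{s}_{\boldsymbol{\phi}^*}(\mathbf{x}_t,y,t)=\nabla_{\mathbf{x}_t}\log q_t(\mathbf{x}_t|y)$ for $q_t$-almost every $(\mathbf{x}_t,y)$ and almost every $t$. So it suffices to compute $\nabla\log q_t(\mathbf{x}_t|y)$.

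\textbf{Step 2 (marginal decomposition).} Write
\[
q_t(\mathbf{x}_t|y)=\int p(\mathbf{x}_t|\mathbf{x}_0)\,q(\mathbf{x}_0|y)\,\mathrm{d}\mathbf{x}_0 .
\]
Substitute $q(\mathbf{x}_0|y)=p(\mathbf{x}_0|y)\exp(r(\mathbf{x}_0,y)/\beta)/Z(y)$ from \eqref{eq:energy_weighted}. Bayes' rule gives $p(\mathbf{x}_t|\mathbf{x}_0)p(\mathbf{x}_0|y)=p_t(\mathbf{x}_t|y)\,p(\mathbf{x}_0|\mathbf{x}_t,y)$; this uses the fact that the kernel does not depend on $y$, so that $p(\mathbf{x}_t|\mathbf{x}_0,y)=p(\mathbf{x}_t|\mathbf{x}_0)$. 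Therefore
\[
q_t(\mathbf{x}_t|y)=\frac{p_t(\mathbf{x}_t|y)}{Z(y)}\,\mathbb{E}_{p(\mathbf{x}_0|\mathbf{x}_t,y)}\big[\exp(r(\mathbf{x}_0,y)/\beta)\big].
\]

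\textbf{Step 3 (take logs and gradients).} Take the logarithm and then $\nabla_{\mathbf{x}_t}$. The term $\log Z(y)$ does not depend on $\mathbf{x}_t$ and drops out, which leaves exactly \eqref{eq:reward_guided}.

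Most of the care goes into the technical conditions rather than the algebra. One is justifying positivity and finiteness of the conditional expectation, so the logarithm is defined. The other is justifying differentiation of $\log$ of the integral. I would handle both by assuming $r$ is bounded above (or that $\exp(r/\beta)$ is $p$-integrable) and that the Gaussian kernel is smooth in $\mathbf{x}_t$, then applying dominated convergence. I would also state explicitly that Step 1 is only an almost-everywhere identity under the realizability assumption.
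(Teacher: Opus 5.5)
Your proof is correct, but it takes a more direct route than the paper.

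\textbf{Your route.} You compute the target marginal itself. Using the shared kernel $p(\mathbf{x}_t|\mathbf{x}_0)$ and Bayes' rule, you get $q_t(\mathbf{x}_t|y)=Z(y)^{-1}p_t(\mathbf{x}_t|y)\,\mathbb{E}_{p(\mathbf{x}_0|\mathbf{x}_t,y)}[\exp(r(\mathbf{x}_0,y)/\beta)]$. This is a Doob $h$-transform identity at the level of densities, and you then take $\nabla_{\mathbf{x}_t}\log$.

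\textbf{The paper's route.} The paper never writes this density identity down. It works at the score level instead:
\begin{itemize}
\item Lemma~\ref{DSM-equ-guidance} shows that the target score-matching objective has the same minimizer as an importance-weighted denoising score-matching objective on source samples, with weight $w=q(\mathbf{x}_0,y)/p(\mathbf{x}_0,y)$.
\item It reads off that minimizer as the self-normalized ratio $\mathbb{E}_{p(\mathbf{x}_0|\mathbf{x}_t,y)}[w\,\nabla_{\mathbf{x}_t}\log p(\mathbf{x}_t|\mathbf{x}_0)]/\mathbb{E}_{p(\mathbf{x}_0|\mathbf{x}_t,y)}[w]$.
\item It verifies that the claimed right-hand side equals this ratio. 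This uses the log-derivative trick together with $\nabla_{\mathbf{x}_t}\log p(\mathbf{x}_0|\mathbf{x}_t,y)=\nabla_{\mathbf{x}_t}\log p(\mathbf{x}_t|\mathbf{x}_0)-\nabla_{\mathbf{x}_t}\log p_t(\mathbf{x}_t|y)$.
\item Finally it substitutes $w=\exp(r/\beta)/Z(y)$.
\end{itemize}

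\textbf{What each approach buys.} Your argument is shorter. It states explicitly the realizability and differentiation-under-the-integral assumptions that the paper uses tacitly. It also works with the conditional ratio $q(\mathbf{x}_0|y)/p(\mathbf{x}_0|y)$ exactly as \eqref{eq:energy_weighted} specifies it. The paper instead replaces the joint ratio by $\exp(r/\beta)/Z(y)$, which implicitly assumes equal prompt marginals. That is harmless, since any $y$-only factor vanishes under $\nabla_{\mathbf{x}_t}\log$.

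The paper's detour buys an interpretation. It shows that the aligned score is the minimizer of a loss computable from reweighted source samples, which ties the result to the transfer-learning framework that motivates the method. For the bare identity, your route suffices.
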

The proof can be found in Appendix \ref{app:proofs}. Based on \eqref{eq:reward_guided}, we can calculate the additional guidance term rather than finetuning the text-to-image generative model. In general, the guidance term in \eqref{eq:reward_guided} is not straightforward to compute as we need to sample from $p(\mathbf{x}_0|\mathbf{x}_t, y)$ for each $\mathbf{x}_t$ in the generation process. In the following, we first discuss some existing ways to calculate the guidance term.

\subsection{Vanilla Method to Compute the Guidance Term} \label{sec:vanilla}



\paragraph{M1: Direct backpropagate through diffusion process.}

The first method directly backpropagates through diffusion process to calculate $\nabla_{\mathbf{x}_t} \log \mathbb{E}_{p(\mathbf{x}_0|\mathbf{x}_t, y)}[ \exp( r(\mathbf{x}_0, y)/\beta)]$ for fine-tuning the diffusion model. In \citep{Song2023LossGuidedDM}, the authors propose an unbiased Monte Carlo estimation:
\[
    \nabla_{\mathbf{x}_t} \log \mathbb{E}_{p(\mathbf{x}_0|\mathbf{x}_t, y)}
    \Big[ \exp\Big( \frac{1}{\beta} r(\mathbf{x}_0, y) \Big) \Big]  \approx \nabla_{\mathbf{x}_t} \log 
    \frac{1}{n} \sum_{i=1}^{n} \exp\Big( \frac{1}{\beta} r(\mathbf{x}_0^i, y) \Big),
    \]
where $\mathbf{x}_0^i$ denotes the $i$-th sample drawn from $p(\mathbf{x}_0|\mathbf{x}_t, y)$. However, this Monte Carlo estimation significantly increases memory costs, especially in text-to-image generation.
Inspired by recent studies \citep{Clark2023DirectlyFD}, we can borrow the same techniques, e.g., accumulated gradients along the diffusion process using techniques such as low-rank adaptation (LoRA) \citep{Hu2021LoRALA} and truncation or gradient checkpointing \citep{Prabhudesai2023AligningTD, Clark2023DirectlyFD}, to alleviate the memory cost of backpropagating through the diffusion process for calculating the guidance term. We can further reduce the memory cost by using the few-step diffusion model as the reference model. Despite these techniques, the memory requirements remain higher compared to the proposed approach.

\paragraph{M2: Approximate and apply Tweedie's formula.}

The second method first approximates the guidance term by \cite{Chung2022DiffusionPS}:
\begin{equation}\label{eq:biased}
     \nabla_{\mathbf{x}_t} \log \mathbb{E}_{p(\mathbf{x}_0|\mathbf{x}_t, y)}
    \big[ \exp\big( \frac{1}{\beta}r(\mathbf{x}_0, y) \big) \big]  \approx \frac{1}{\beta} \nabla_{\mathbf{x}_t} r\big(    \mathbb{E}_{p(\mathbf{x}_0|\mathbf{x}_t, y)}[\mathbf{x}_0], y \big).    
\end{equation}
Then, Tweedie's formula is further applied by \cite{Bansal2023UniversalGF, Chung2022DiffusionPS, Yu2023FreeDoMTE}:
\[
\mathbb{E}\left[\boldsymbol{x}_0 \mid \boldsymbol{x}_t, y\right]=\boldsymbol{x}_t + \sigma_t^2 \nabla_{\boldsymbol{x}_t} \log p_{t}\left(\boldsymbol{x}_t|y\right).
\]

However, as noted in \citep{Lu20, Song2023LossGuidedDM}, the approximation used in \eqref{eq:biased} is biased, leading to an incorrect calculation of the guidance term.

In the following, we empirically evaluate the effectiveness of these methods for aligning text-to-image generation tasks. We first identify a previously overlooked issue that contributes to suboptimal alignment performance.
%
%
%
%
%
Figure~\ref{fig:adv} illustrates the performance of two vanilla methods under the guidance of PickScore \citep{Kirstain2023PickaPicAO}, a reward function that evaluates whether the generated images align with human aesthetic and semantic preferences. The x-axis represents the strength of the guidance term, denoted by $\alpha$ \footnote{Although there is no $\alpha$ in \eqref{eq:reward_guided}, many guidance methods \citep{Lu20, Song2023LossGuidedDM} add this hyperparameter in practice to balance the strength of the guidance term with the score.}. Our experiments reveal that tuning this hyperparameter presents significant challenges. Insufficient values of $\alpha$
produce results indistinguishable from unguided generation, while excessive values introduce substantial artifacts that degrade image quality.

We attribute this phenomenon to the adversarial nature of the guidance mechanism, as observed in prior work \citep{Shen2024UnderstandingAI}. In \eqref{eq:reward_guided}, the guidance term is directly added to the estimated score. If the landscape is not smooth or does not behave well\footnote{We use landscape to describe the change of reward given the change of images.}, the adversarial nature of the guidance can lead to undesirable artifacts in the generated images. 
To address these limitations, our proposed framework provides theoretical guarantees for generating properly aligned distributions with a fixed strength parameter $\alpha = 1$. Furthermore, we develop an additional regularization technique for training the guidance network that mitigates these instability issues.


\subsection{Proposed Finetuning-free Guidance for Diffison Models} \label{sec:FFG}

We first utilize the following trick to calculate the conditional expectation, similar to previous works \citep{Ouyang2024TransferLF, Lu20}. 
\begin{lemma}\label{thm:exact_guidance}
For a neural network $h_{\boldsymbol{\psi}}\left(\mathbf{x}_t, y, t\right)$ parameterized by $\boldsymbol{\psi}$, define the objective  
\begin{equation} \label{eq:guidance}
\mathcal{L}_{\text{guidance}}(\boldsymbol{\psi}) :=\mathbb{E}_{p(\mathbf{x}_0, \mathbf{x}_t, y)}\left[\left\|h_{\boldsymbol{\psi}}\left(\mathbf{x}_t, y, t\right)-\exp(\frac{1}{\beta}r(\mathbf{x}_0, y))\right\|_2^2\right],    
\end{equation}
then its minimizer $\boldsymbol{\psi}^* = \underset{\boldsymbol{\psi}}{\arg \min } \ \mathcal{L}_{\text{guidance}}(\boldsymbol{\psi})$ satisfies:
\[
h_{\boldsymbol{\psi}^*}\left(\mathbf{x}_t, y, t\right)=\mathbb{E}_{p(\mathbf{x}_0 |\mathbf{x}_t, y)}\left[{\exp(\frac{1}{\beta}r(\mathbf{x}_0, y))}\right].
\]
\end{lemma}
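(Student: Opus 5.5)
The plan is to use the standard fact that the conditional expectation is the $L^2$-optimal predictor. We read the minimization over $\boldsymbol{\psi}$ as a minimization over all measurable functions $h(\mathbf{x}_t,y,t)$. This is the usual idealization, in which the network class is rich enough to contain the optimizer. We then show that the pointwise minimizer is the stated conditional expectation.

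Fix $t$ and abbreviate $Z:=\exp(\frac{1}{\beta}r(\mathbf{x}_0,y))$. Define
\[
m(\mathbf{x}_t,y,t):=\mathbb{E}_{p(\mathbf{x}_0|\mathbf{x}_t,y)}[Z].
\]
We assume $\mathbb{E}[Z^2]<\infty$, which holds for instance if $r$ is bounded. Then $m$ is well defined and square integrable.

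The key step is to add and subtract $m$ inside the norm and expand:
\[
\mathcal{L}_{\text{guidance}}(\boldsymbol{\psi})=\mathbb{E}\big[\|h_{\boldsymbol{\psi}}-m\|_2^2\big]+2\,\mathbb{E}\big[(h_{\boldsymbol{\psi}}-m)(m-Z)\big]+\mathbb{E}\big[\|m-Z\|_2^2\big].
\]
The cross term vanishes by the tower property. Write $p(\mathbf{x}_0,\mathbf{x}_t,y)=p(\mathbf{x}_t,y)\,p(\mathbf{x}_0|\mathbf{x}_t,y)$ and condition on $(\mathbf{x}_t,y)$. The factor $h_{\boldsymbol{\psi}}-m$ is a function of $(\mathbf{x}_t,y,t)$ only, so it can be pulled out of the inner expectation. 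What remains is $\mathbb{E}_{p(\mathbf{x}_0|\mathbf{x}_t,y)}[m-Z]=0$. The cross term is integrable by Cauchy--Schwarz, since both factors are in $L^2$.

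Hence
\[
\mathcal{L}_{\text{guidance}}(\boldsymbol{\psi})=\mathbb{E}\big[\|h_{\boldsymbol{\psi}}-m\|_2^2\big]+C,
\]
where $C=\mathbb{E}\big[\|m-Z\|_2^2\big]$ does not depend on $\boldsymbol{\psi}$. This expression is minimized exactly when $h_{\boldsymbol{\psi}^*}=m$ holds $p(\mathbf{x}_t,y)$-almost everywhere, which gives the claim. If $t$ is also averaged over, as in \eqref{eq:dsm}, the same argument applies for each $t$ separately.

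I do not expect a real obstacle, since this is the classical regression identity. The only point needing care is the interpretation. The conclusion holds almost everywhere under $p(\mathbf{x}_t,y)$, and only under the realizability assumption that $m$ lies in the model class. I would state both explicitly, along with the integrability condition on $\exp(r/\beta)$.
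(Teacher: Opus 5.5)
Your proposal is correct and follows essentially the same route as the paper. Both arguments condition on $(\mathbf{x}_t, y)$, pull $h_{\boldsymbol{\psi}}$ out of the inner $\mathbf{x}_0$-expectation, and reduce the loss to $\mathbb{E}\|h_{\boldsymbol{\psi}} - m\|_2^2$ plus a $\boldsymbol{\psi}$-independent constant; the paper expands the square and completes it, while you add and subtract $m$ and kill the cross term by the tower property, and your realizability, square-integrability and almost-everywhere caveats are left implicit in the paper.
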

By Lemma \ref{thm:exact_guidance}, we can instead estimate the value $\mathbb{E}_{p(\mathbf{x}_0 |\mathbf{x}_t, y)}[\exp(r(\mathbf{x}_0, y)/\beta)]$ using the guidance network $h_{\boldsymbol{\psi}^*}$ obtained by minimizing the objective function $\mathcal{L}_{\text{guidance}}(\boldsymbol{\psi})$, which can be approximated by easy sampling from the joint distribution $p(\mathbf{x}_0, \mathbf{x}_t, y)$. Then, the estimated score function for the aligned diffusion model can be calculated as follows:
\begin{equation}\label{eq:dsm_IS_sampling}
\mathbf{s}_{\boldsymbol{\phi}^*}(\mathbf{x}_t, y, t) = \underbrace{\nabla_{\mathbf{x}_t} \log p(\mathbf{x}_t | y)}_{\substack{\text {pre-trained model}\\ \text{on source}}}+ \underbrace{\nabla_{\mathbf{x}_t} \log h_{\boldsymbol{\psi}^*}\left(\mathbf{x}_t, y, t\right)}_{\text {guidance network}}.
\end{equation}

\begin{table*}[t]
\centering
\caption{Comparison of finetuning-free alignment algorithms on diffusion models. Our method uniquely provides theoretical guarantees for the correct form for guidance with a step size guarantee.}
\label{tab:comparison}
\resizebox{\linewidth}{!}{%
\begin{tabular}{lcccc}
\toprule
\textbf{Method} & \textbf{Classifier Guidance} & \textbf{Direct backpropagate (M1)} & \textbf{Tweedie’s formula (M2)} & \textbf{Ours} \\
\midrule
Formulation & $\frac{1}{\beta} \nabla_{\mathbf{x}_t} r\big(\mathbf{x}_t, y\big)$ & 
$\nabla_{\mathbf{x}_t} \log \frac{1}{n} \sum_{i=1}^{n} \exp\Big( \frac{1}{\beta} r(\mathbf{x}_0^i, y) \Big)$ & 
$\frac{1}{\beta} \nabla_{\mathbf{x}_t} r\big( \mathbb{E}_{p(\mathbf{x}_0|\mathbf{x}_t, y)}[\mathbf{x}_0], y \big)$ & 
$\nabla_{\mathbf{x}_t} \log h_{\boldsymbol{\psi}^*}\left(\mathbf{x}_t, y, t\right)$ \\
\midrule
Unbiased & \xmark & \cmark & \xmark & \cmark \\
Step size guarantee & \xmark & \xmark & \xmark & \cmark \\
\bottomrule
\end{tabular}}
\end{table*}

To alleviate the adversarial nature of the guidance, we can adopt the consistency regularization $\mathcal{L}_{\text{consistence}}$ to learn the guidance network $h_{\boldsymbol{\psi}^*}$ better, i.e., the gradient of $\mathcal{L}_{\text{consistence}}\left(\mathbf{x}_t, y, t\right)$ with respect to $\mathbf{x}_t$ should match the score in preferred data. The key point of this regularization is that we cannot easily change the landscape of a given predetermined reward function, but we can regularize the landscape of the learned guidance network to ensure the generation of high-quality images.
\begin{equation}\label{eq:consistence}
\begin{aligned}
\boldsymbol{\psi}^* 
&= \underset{\boldsymbol{\psi}}{\arg \min } \ \mathcal{L}_{\text{consistence}} \\
&:= \mathbb{E}_{q(\mathbf{x}_0, y)} 
\mathbb{E}_{q(\mathbf{x}_t|\mathbf{x}_0)} \Big[
\big\|\nabla_{\mathbf{x}_t} \log p(\mathbf{x}_t|\mathbf{x}_0, y)  + \nabla_{\mathbf{x}_t} \log h_{\boldsymbol{\psi}}\left(\mathbf{x}_t, y, t\right)
- \nabla_{\mathbf{x}_t} \log q(\mathbf{x}_t | \mathbf{x}_0, y) 
\big\|_2^2 \Big].
\end{aligned}
\end{equation}

Combining the consistency regularization terms together with the original guidance loss in \eqref{eq:guidance}, the final learning objective for the guidance network can be described as follows:
\begin{equation}\label{eq:final_obj}
\boldsymbol{\psi}^* = \underset{\boldsymbol{\psi}}{\arg \min } \ \{\mathcal{L}_{\text{guidance}}+ \eta \ \mathcal{L}_{\text{consistence}} \},  
\end{equation}
where $\eta \geq 0$ are hyperparameters that control the strength of additional regularization, which also enhances the flexibility of our solution scheme.

\subsection{Further Improvement to One-step Generation}
The training objectives in ~\eqref{eq:guidance} and \eqref{eq:consistence} are agnostic to the reference model, indicating that we can use any pre-trained diffusion model with any reward function, whether differentiable or not. Motivated by the computational efficiency of one-step generative models in practical applications, we further present a straightforward approach for applying our proposed finetuning-free guidance to one-step text-to-image models.

Specifically, instead of sampling \( t \) uniformly from \([0,T]\), we can simply set \( t = T \). This small modification offers several advantages. First, while one-step diffusion models may not perform as well as few-step (2–4 step) models~\citep{salimans2022progressive}, we empirically find that with additional guidance, their performance improves significantly, as presented in Section~\ref{sec:abl}. Second, as the guidance network \( h_{\boldsymbol{\psi}} \) now becomes time-independent, we empirically observe that \( h_{\boldsymbol{\psi}} \) is easy to train—with ten training epochs on the Pick-a-Pic V1 dataset, our guidance network produces high-quality images, which can be found in Section \ref{sec:exp_res}. We summarize the overall learning pipeline in Algorithm \ref{alg:guidance_training_regularized} in the Appendix. And we leave another two gradient-free designs for diffusion models in Appendix \ref{apen:gradient-free}.

\section{Proposed Framework for Flow Matching}
\label{sec:method_flow}
\paragraph{Training-free Alignment Framework for Flow Matching}

Given that state-of-the-art models are grounded in Diffusion Transformers \citep{Peebles2022ScalableDM} and flow matching \citep{Lipman2022FlowMF}, we present the exact form of flow-matching guidance in the theorem below.

\begin{theorem}\label{thm:fm}
Let $\boldsymbol{\phi}_q^*$ be the optimal solution for the conditional flow matching model trained on target domain $q(\mathbf{x}_1, y)$ (where $\mathbf{x}_1$ are sampled from data distribution, $\mathbf{v}_q(\mathbf{x}_t, y, t)$ denotes the oracle velocity field on target distribution), i.e., $\boldsymbol{\phi}_q^*$ equals
\begin{align*}
 \underset{\boldsymbol{\phi}}{\arg \min }\mathbb{E}_t \Big\{
\mathbb{E}_{q_t(\mathbf{x}_t, y)} \Big[ \Big\| 
\mathbf{v}_{\boldsymbol{\phi}}(\mathbf{x}_t, y, t) - \mathbf{v}_q(\mathbf{x}_t, y, t)
\Big\|_2^2 \Big] \Big\}, 
\end{align*}
then 
\begin{equation}\label{eq:reward_guided_fm}
\mathbf{v}_{\boldsymbol{\phi}^*_q}(\mathbf{x}_t, y, t) 
= \mathbf{v}_{\boldsymbol{\phi}_p}(\mathbf{x}_t, y, t)+  \mathbb{E}_{\mathbf{x}_1 \sim p_{1 \mid t}(\mathbf{x}_1 \mid \mathbf{x}_t, y)}\bigg[ \big( R(\bx_1,\bx_t,y) - 1 \big) \boldsymbol{v}_t(\mathbf{x}_t \mid \mathbf{x}_1, y) \bigg],
\end{equation}
where 
\[
R(\bx_1,\bx_t,y)=\frac{\exp \left(\frac{1}{\beta} r(\mathbf{x}_1, y)\right)} {\mathbb{E}_{\mathbf{x}_1' \sim p_{1 \mid t}(\mathbf{x}_1 \mid \mathbf{x}_t, y)} \left[\exp \left(\frac{1}{\beta} r(\mathbf{x}_1', y)\right)\right]}.
\]
\end{theorem}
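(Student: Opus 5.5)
The plan is to use the standard marginal-velocity representation of flow matching. We assume the target model uses the same conditional probability path $p_t(\mathbf{x}_t\mid\mathbf{x}_1,y)$ (the linear interpolation with Gaussian base) as the reference model, and hence the same conditional velocity $\boldsymbol{v}_t(\mathbf{x}_t\mid\mathbf{x}_1,y)$. Only the data distribution changes, from $p(\mathbf{x}_1\mid y)$ to $q(\mathbf{x}_1\mid y)=p(\mathbf{x}_1\mid y)\exp(r(\mathbf{x}_1,y)/\beta)/Z(y)$, as in \eqref{eq:energy_weighted}.

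First, since the objective is a weighted $L^2$ regression, the minimizer $\mathbf{v}_{\boldsymbol{\phi}^*_q}$ coincides $q_t$-a.e.\ with the oracle marginal field. By the flow matching construction \citep{Lipman2022FlowMF}, this field is
\[
\mathbf{v}_q(\mathbf{x}_t,y,t)=\mathbb{E}_{q_{1\mid t}(\mathbf{x}_1\mid\mathbf{x}_t,y)}\big[\boldsymbol{v}_t(\mathbf{x}_t\mid\mathbf{x}_1,y)\big].
\]
The same identity for $p$ gives $\mathbf{v}_{\boldsymbol{\phi}_p}=\mathbb{E}_{p_{1\mid t}}[\boldsymbol{v}_t(\mathbf{x}_t\mid\mathbf{x}_1,y)]$, taking the pre-trained model as optimal, which is the same idealization used in Theorem \ref{thm:IS_guidance}.

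Second, we express the target posterior through the reference posterior by Bayes' rule. Because the conditional path $p_t(\mathbf{x}_t\mid\mathbf{x}_1,y)$ is shared,
\[
q_{1\mid t}(\mathbf{x}_1\mid\mathbf{x}_t,y)\propto p_t(\mathbf{x}_t\mid\mathbf{x}_1,y)\,p(\mathbf{x}_1\mid y)\exp\!\big(\tfrac1\beta r(\mathbf{x}_1,y)\big)\propto p_{1\mid t}(\mathbf{x}_1\mid\mathbf{x}_t,y)\exp\!\big(\tfrac1\beta r(\mathbf{x}_1,y)\big).
\]
Normalizing over $\mathbf{x}_1$ yields $q_{1\mid t}=p_{1\mid t}\,R(\mathbf{x}_1,\mathbf{x}_t,y)$, with $R$ exactly as in the statement. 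The constant $Z(y)$ cancels in this normalization. This mirrors the change-of-posterior step behind Theorem \ref{thm:IS_guidance}, but here we work with posterior means rather than log-gradients.

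Third, substituting gives
\[
\mathbf{v}_q=\mathbb{E}_{p_{1\mid t}}\big[R\,\boldsymbol{v}_t(\mathbf{x}_t\mid\mathbf{x}_1,y)\big].
\]
Subtracting $\mathbf{v}_{\boldsymbol{\phi}_p}=\mathbb{E}_{p_{1\mid t}}[\boldsymbol{v}_t(\mathbf{x}_t\mid\mathbf{x}_1,y)]$ then produces \eqref{eq:reward_guided_fm}. Note that $\mathbb{E}_{p_{1\mid t}}[R-1]=0$, which is consistent with the result.

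The main obstacle is the first step: justifying that the $L^2$ minimizer equals the marginal oracle field, and that this field is the posterior mean of the conditional velocity. This requires the objective in the statement to be read as regression onto the conditional target. Equivalently, we need the standard equivalence of the flow-matching and conditional-flow-matching losses up to a constant, which holds under integrability of $\boldsymbol{v}_t$ and positivity of $q_t$. We also need $\mathbb{E}_{p_{1\mid t}}[\exp(r/\beta)]<\infty$ so that $R$ is well defined. I would state these as mild assumptions, analogous to those in Appendix \ref{app:proofs}, and cite \citet{Lipman2022FlowMF} for the equivalence.
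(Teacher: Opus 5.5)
Your proposal is correct and follows essentially the same route as the paper's proof: write the target field as the posterior mean $\mathbb{E}_{q_{1\mid t}}[\boldsymbol{v}_t(\mathbf{x}_t\mid\mathbf{x}_1,y)]$, then use the shared conditional path $q_{t\mid 1}=p_{t\mid 1}$ and Bayes' rule to turn $q_{1\mid t}/p_{1\mid t}$ into $R(\mathbf{x}_1,\mathbf{x}_t,y)$, and subtract the reference field (the paper writes this weight as $\frac{q_1/p_1}{q_t/p_t}$ with $q_t/p_t=\mathbb{E}_{p_{1\mid t}}[q_1/p_1]$, so $Z(y)$ cancels exactly as in your normalization step). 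The paper uses the shared-path assumption and the marginal-velocity formula without comment, so your explicit assumptions are more careful; note only that the stated objective already regresses onto the marginal oracle $\mathbf{v}_q$, so you do not actually need the flow-matching/conditional-flow-matching equivalence you flag as the main obstacle.
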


\paragraph{Estimation of the Guidance Term for Flow Matching}
Different from the guidance term of diffusion models in \eqref{eq:reward_guided}, the guidance for flow matching in \eqref{eq:reward_guided_fm} does not have the adversarial problem. The guidance term is a conditional expectation without the gradient operator. 

To enable fast sampling, we would like to use importance sampling to convert the conditional expectation under $p(\mathbf{x}_1 \mid \mathbf{x}_t, y)$ into an expectation under $p(\mathbf{x}_1\mid y)$. After the detailed derivation in \ref{ap:proof_fm}, we can calculate the guidance term of flow matching by \begin{align*} \E_{\mathbf{x}_1 \sim p(\mathbf{x}_1 \mid y)}
\Bigg[
\Bigg(
\frac{
\exp\!\big(\frac{1}{\beta} r(\mathbf{x}_1, y)\big)
}{
\mathbb{E}_{\mathbf{x}_1} \Big[\exp\!\big(\frac{1}{\beta} r(\mathbf{x}_1, y)\big)
\frac{p_{t|1}(\mathbf{x}_t \mid \mathbf{x}_1, y)}{\mathbb{E}_{\mathbf{x}_1}[p_{t|1}(\mathbf{x}_t \mid \mathbf{x}_1, y)]} \Big]
} - 1
\Bigg) \boldsymbol{v}_t(\mathbf{x}_t \mid \mathbf{x}_1, y) \frac{p_{t|1}(\mathbf{x}_t \mid \mathbf{x}_1, y)}
{\mathbb{E}_{\mathbf{x}_1}[p_{t|1}(\mathbf{x}_t \mid \mathbf{x}_1, y)]}
\Bigg].
\end{align*}

Therefore, we do not need to sample $\mathbf{x}_1$ with multiple function evaluations, but just sample from the marginal data distribution. Compared with the finetuning-free method proposed in \eqref{eq:dsm_IS_sampling}, this formulation is training-free and offers greater computational efficiency.

\section{Experimental Results}
\label{sec:exp}
In this section, we present a comprehensive experimental evaluation, demonstrating the effectiveness of our two frameworks for sampling directly from reward-guided distributions. We first outline our experimental setup and evaluation criteria in Section \ref{sec:exp_setup}, followed by benchmark results against state-of-the-art methods in Section \ref{sec:exp_res}. Finally, we provide an in-depth ablation study that validates our key theoretical claims and demonstrates the superior performance of our guidance network in Section \ref{sec:abl}.

\begin{table*}[t!]
\centering
\caption{Benchmark comparison of different methods on text-to-image alignment. Results are grouped by base model.}
\resizebox{\textwidth}{!}{
\begin{tabular}{c|c|c|c|c|c|c}
\toprule
Type & Method & PickScore & HPSV2 & ImageReward & Aesthetic & Training GPU Hour \\
\midrule

\multicolumn{7}{c}{\textbf{Base Model: SDXL}} \\
\midrule
Baseline & SDXL & 21.95 & 26.95 & 0.5380 & 5.950 & -- \\
\midrule
\multirow{2}{*}{Training-free}
 & Direct backpropagate & 21.84 & 27.53 & 0.5870 & 5.922 & -- \\
 & Tweedie's formula & 22.34 & 28.76 & 0.9501 & 6.002 & -- \\
\midrule
\multirow{2}{*}{Finetuning-based}
 & Diff.-DPO & 22.64 & 29.31 & 0.9436 & 6.015 & 4800 \\
 & SPO & 23.06 & 31.80 & \textbf{1.0803} & 6.364 & 234 \\
\midrule
Finetuning-free
 & Ours & \textbf{23.08} & \textbf{32.12} & 1.0625 & \textbf{6.452} & \textbf{92} \\

\midrule\midrule
\multicolumn{7}{c}{\textbf{Base Model: SD3.5 Large Turbo}} \\
\midrule
Baseline
 & SD3.5 Large Turbo & 22.30 & 30.29 & 1.0159 & 6.5190 & -- \\
\midrule
Finetuning-free
 & Ours & \textbf{23.14} & \textbf{32.31} & \textbf{1.1025} & \textbf{6.5280} & -- \\
\bottomrule
\end{tabular}}
\label{tab:pick}
\end{table*}

\subsection{Experimental Setup}\label{sec:exp_setup}
For the experiments on diffusion models, we follow the official configurations recommended for SPO \citep{Liang2024StepawarePO}, Diffusion-DPO \citep{Wallace2023DiffusionMA}, and MAPO \citep{She2024MAPOAM}. Diffusion-DPO and MAPO are fine-tuned on the Pick-a-Pic V2 dataset, which contains over 800k image preference pairs. In contrast, SPO is fine-tuned online using 4k text prompts (without images) randomly selected from Pick-a-Pic V1. Our method trains the guidance network offline using 583k image preference pairs from Pick-a-Pic V1. Overall, our method and the competing models in the text-to-image alignment benchmark are trained on comparable datasets, allowing for a fair comparison. We adopt Stable Diffusion XL (SDXL)-Turbo \citep{Sauer2023AdversarialDD} as the reference model for one-step text-to-image generation. For the experiments on flow matching, we adopt the state-of-the-art SD3.5 Large Turbo \citep{Esser2024ScalingRF} as the backbone. The official recommendation for the number of sampling steps is four to eight, and we use four steps for all experiments.

\paragraph{Implementation Details.}
In the following, we provide the training details for the guidance network of the diffusion model. Since the guidance network takes noisy images $\mathbf{x}_T$ and prompts $y$ as input and outputs a scalar value, we adopt the same variational autoencoder (VAE), tokenizer, and text encoder from the reference diffusion model for encoding image and text. Consequently, the trainable parameters of our guidance network are quite small. In practice, we adopt two convolutional layers for processing VAE-encoded feature maps and a five-layer multi-layer perceptron (MLP) to project the image and text embedding to a scalar. The total parameter size of the guidance network is only 72 MB, making it lightweight and easy to train. We train the guidance network on the Pick-a-Pic training dataset for 10 epochs with batch size 32, Adam optimizer, learning rate 1e-3, and hyperparameters $\eta=1$.

\paragraph{Evaluation Criterion.}
Following established evaluation protocols \citep{Wallace2023DiffusionMA, Liang2024StepawarePO}, we report quantitative results using 500 validation prompts from the validation unique split of Pick-a-Pic. We adopt four evaluation criteria to evaluate different aspects of image quality. PickScore \citep{Kirstain2023PickaPicAO} measures overall human preference by aggregating judgments on aesthetic appeal, coherence, and realism. HPSV2 \citep{Wu2023HumanPS} assesses prompt adherence, ensuring the generated image accurately reflects the given textual description. ImageReward \citep{Xu2023ImageRewardLA} quantifies human preference based on fine-grained attributes such as composition, detail preservation, and semantic relevance. Lastly, the aesthetic evaluation model from LAION \citep{schuhmann2022laion} focuses on visual appeal, capturing factors such as color harmony, style, and artistic quality.

\subsection{Experimental Results} \label{sec:exp_res}
As shown in Table \ref{tab:pick}, our method surpasses baseline approaches across four evaluation criteria, demonstrating the effectiveness of the two proposed frameworks in enhancing text-to-image alignment. The improvements are observed in both perceptual quality and semantic coherence, indicating that our guidance network successfully refines image generation to better match textual descriptions. This performance gain highlights the advantages of our lightweight architecture and the optimization strategy used during training. Figure \ref{fig:f1} provides a qualitative comparison with baseline methods, further illustrating the superior visual fidelity and text alignment achieved by our approach.

\subsection{Ablation study} \label{sec:abl}
In this section, we first verify the advantages of our proposed method against other finetuning-free guidance methods as summarized in Table \ref{tab:comparison}. We then analyze the impact of few-step (2–4 step) generation compared to one-step generation, highlighting how our guidance term significantly enhances performance.

As illustrated in Figure \ref{fig:apple}, vanilla guidance methods struggle to induce meaningful improvements in generated images, even with carefully tuned guidance strength. Increasing the guidance parameter $\alpha$ often leads to undesirable artifacts rather than quality improvements. In contrast, our method effectively enhances image generation by leveraging a regularized guidance network, demonstrating its ability to refine scene details and improve alignment with input prompts.

To further explore this, we examine the performance of our method against two vanilla guidance techniques, Tweedie’s and Backpropagate, as well as the no guidance baseline, all under a one-step sampling condition. As shown in Table \ref{tab:ablation}, our method achieves the highest PickScore. This demonstrates that our regularized guidance network provides a substantial improvement over no guidance scenario and traditional methods. Consistent with prior studies, increasing the number of steps from one to two or three results in improved image quality, as shown in Figure \ref{fig:apple} and Table \ref{tab:ablation}. However, our method enables one-step generation to achieve performance even better than 2- or 3-step generation, highlighting the power of our guidance network. In Appendix \ref{ap:ablation}, we include the sensitive analysis of the regularization strength.

\begin{figure*}[t!]
\centering
  \includegraphics[width=1\textwidth]{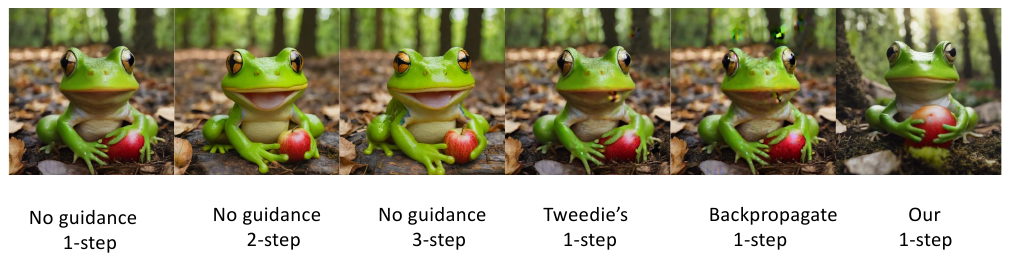}
  \caption{Effectiveness of the proposed method for diffusion models: The results demonstrate that 2-step and 3-step generation significantly improve the quality of the generated images compared to one-step generation. While two vanilla guidance methods (Tweedie’s formula or directly backpropagation summarized in Section \ref{sec:vanilla}) fail to produce meaningful changes in the scene despite appropriate guidance strength, our method successfully achieves this enhancement. The prompt is ``A photo of a frog holding an apple while smiling in the forest''.}
  \label{fig:apple}
\end{figure*}
\begin{table}[t!]
    \centering
    \caption{Ablation study comparing the performance of our method with no guidance and two vanilla guidance methods under one-step and multi-step generation. Our method outperforms all baselines, which demonstrates the effectiveness of our guidance network in refining image quality and prompt alignment.}
    \label{tab:ablation}
    \begin{tabular}{c|c}
\toprule  
Method & PickScore \\
\midrule
Ours (1 step) & \textbf{23.08} \\
No guidance (1 step) & 22.14 \\
Tweedie's (1 step) & 22.34 \\
Backpropagate (1 step) & 21.84 \\
No guidance (2 steps) & 22.64 \\
No guidance (3 steps) & 22.56 \\
\bottomrule
\end{tabular}
\end{table}




\section{Related Work}\label{sec:ap_related}
Existing alignment methods can be broadly categorized into two approaches: RLHF-based method that uses policy gradient to update the diffusion models and flow matching, and DPO-based methods that use a parametrization trick to update the diffusion models without explicitly learning the reward function.

\paragraph{RLHF-based alignment of diffusion model and flow matching.}
Lee et al. \citep{Lee2023AligningTM} first train a reward model to predict human feedback and adopt a reward-weighted finetuning objective to align the diffusion model. In \citep{Fan2023DPOKRL, Black2023TrainingDM}, diffusion models are updated using policy gradient algorithms under Kullback–Leibler (KL) constraints. Clark et al. \citep{Clark2023DirectlyFD} propagate gradients of the reward function through the full sampling procedure, and reduce memory costs by adopting low-rank adaptation (LoRA) \citep{Hu2021LoRALA} and gradient checkpointing \citep{Chen2016TrainingDN}. In \cite{Liu2025FlowGRPOTF, Li2025MixGRPOUF, Xue2025DanceGRPOUG, He2025TempFlowGRPOWT}, the authors improve GRPO \citep{Shao2024DeepSeekMathPT} for the alignment of flow matching.

\paragraph{DPO-based alignment of diffusion model.}
A line of work \citep{Wallace2023DiffusionMA, Yang2023UsingHF} directly applies DPO \citep{Rafailov2023DirectPO} to align the diffusion model with human preference. Liang et al. \citep{Liang2024StepawarePO} propose a step-aware preference model and a step-wise resampler to align the preference optimization target with the denoising performance at each timestep. Yang et al. \citep{Yang2024ADR} take on a finer dense reward perspective and derive a tractable alignment objective that emphasizes the initial steps.  

\paragraph{Training-free guidance.}

This line of work \citep{Chung2022DiffusionPS, Graikos2022DiffusionMA, Lu20, Song2023LossGuidedDM, Bansal2023UniversalGF, Yu2023FreeDoMTE, Shen2024UnderstandingAI, Ye2024TFGUT} explores the use of diffusion models as plug-and-play priors for solving inverse problems. Some work \cite{Shen2024UnderstandingAI, Tang2024InferenceTimeAO, Uehara2024UnderstandingRL, Ma2025InferenceTimeSF, Singhal2025AGF} study inference-time optimization for alignment. However, to the best of our knowledge, there has been limited exploration of applying guidance on diffusion models to address the challenge of text-to-image alignment in the context of one-step generation. Also, there has been limited exploration of training-free guidance on flow matching of text-to-image alignment. This gap motivates our work.

\section{Conclusion}\label{sec:conclusion}
In this paper, we introduced two novel framework for aligning text-to-image diffusion models and flow matching models with human preferences. By formulating alignment as sampling from a reward-weighted distribution, our approach leverages a plug-and-play guidance mechanism. Specifically, we decomposed the score function (velocity field) of the reward-weighted distribution into the pre-trained score (velocity field) plus a reward-driven guidance term. For diffusion models, we identify that the adversarial nature of the guidance can introduce undesirable artifacts, and we propose a finetuning-free approach that trains a lightweight guidance network to estimate the conditional expectation of the reward, together with a regularization strategy that stabilizes the guidance landscape. Empirically, our method achieves performance comparable to finetuning-based approaches for one-step generation while reducing computational cost by at least 60\%. For flow matching, we derive the exact form of velocity guidance and propose a training-free estimator that improves generation quality without additional training.


\bibliography{references}
\bibliographystyle{plain}

\appendix
\onecolumn
\clearpage

\section{Theoretical Details for Section \ref{sec:method}} \label{app:proofs}

\subsection{Proof of Theorem \ref{thm:IS_guidance}}\label{ap:proof_thm_ISguidance}
We first restate the complete theorem as follows:
\begin{theorem}\label{thm:IS_guidance_app}
Let the conditional distribution of reference diffusion model $\pi_{\text{ref}}(\bx|y)$ be denoted as distribution $p$ and the reward-weighted distribution $\pi_{\text{r}}(\bx|y)$ defined in \eqref{eq:energy_weighted} as distribution $q$. Assume $\mathbf{x}_t$ and $y$ are conditionally independent given $\mathbf{x}_0$ in the forward process, i.e.,  $p(\mathbf{x}_t|\mathbf{x}_0,y)=p(\mathbf{x}_t|\mathbf{x}_0)$, $\forall t\in[0,T]$. 
Additionally, assume the forward process on the reward-weighted distribution is identical to that on the reference distribution $q(\mathbf{x}_t | \mathbf{x}_0)=p(\mathbf{x}_t | \mathbf{x}_0)$\footnote{These two assumptions are mild since $\mathbf{x}_0$ contains all information about $y$ and $p(\mathbf{x}_t | \mathbf{x}_0)$ and $q(\mathbf{x}_t | \mathbf{x}_0)$ are forward noising process, which is easy to control.},
and $\boldsymbol{\phi}^*$ is the optimal solution for the conditional diffusion model trained on target domain $q(\mathbf{x}_0, y)$, i.e.,
\begin{equation}
\boldsymbol{\phi}^* = \underset{\boldsymbol{\phi}}{\arg \min } ~\mathbb{E}_t\Big\{\lambda(t) 
\mathbb{E}_{q_t(\mathbf{x}_t, y)} \Big[ \Big\| 
\mathbf{s}_{\boldsymbol{\phi}}(\mathbf{x}_t, y, t) - \nabla_{\mathbf{x}_t} \log q_t(\mathbf{x}_t | y)
\Big\|_2^2 \Big] \Big\},
\end{equation}
then 
\begin{equation}\label{eq:reward_guided_app}
\mathbf{s}_{\boldsymbol{\phi}^*}(\mathbf{x}_t, y, t) =\underbrace{\nabla_{\mathbf{x}_t} \log p_t(\mathbf{x}_t |  y)}_{\substack{\text {pre-trained conditional model}\\ \text{on source}}}+ \underbrace{\nabla_{\mathbf{x}_t} \log \mathbb{E}_{p(\mathbf{x}_0|\mathbf{x}_t, y)}\left[ \exp( \frac{1}{\beta}r(\mathbf{x}_0, y))\right]}_{\text {conditional guidance}}. 
\end{equation}
\end{theorem}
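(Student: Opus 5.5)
The plan has two parts. First, I identify the minimizer of the score-matching objective. Then I compute the target score $\nabla_{\mathbf{x}_t}\log q_t(\mathbf{x}_t|y)$ explicitly in terms of $p$.

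\textbf{Step 1: the minimizer.} The objective is a weighted $L^2$ regression of $\mathbf{s}_{\boldsymbol{\phi}}(\mathbf{x}_t,y,t)$ onto the fixed function $\nabla_{\mathbf{x}_t}\log q_t(\mathbf{x}_t|y)$. Assume the network class is rich enough to contain this function, and use $\lambda(t)>0$. Then the objective is minimized, and equals zero, exactly when
\[
\mathbf{s}_{\boldsymbol{\phi}^*}(\mathbf{x}_t,y,t)=\nabla_{\mathbf{x}_t}\log q_t(\mathbf{x}_t|y)
\]
for $q_t$-almost every $(\mathbf{x}_t,y)$ and almost every $t$. So it remains to show that $\nabla_{\mathbf{x}_t}\log q_t(\mathbf{x}_t|y)$ equals the right-hand side of \eqref{eq:reward_guided_app}.

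\textbf{Step 2: rewrite $q_t$ through $p$.} Write
\[
q_t(\mathbf{x}_t|y)=\int q(\mathbf{x}_t|\mathbf{x}_0,y)\,q(\mathbf{x}_0|y)\,\mathrm{d}\mathbf{x}_0 .
\]
The two assumptions together give $q(\mathbf{x}_t|\mathbf{x}_0,y)=q(\mathbf{x}_t|\mathbf{x}_0)=p(\mathbf{x}_t|\mathbf{x}_0)=p(\mathbf{x}_t|\mathbf{x}_0,y)$. The conditional-independence assumption is also used on the $q$ side; this is implicit in saying the forward process is the same. Substituting $q(\mathbf{x}_0|y)=p(\mathbf{x}_0|y)\exp(r(\mathbf{x}_0,y)/\beta)/Z(y)$ from \eqref{eq:energy_weighted} gives
\[
q_t(\mathbf{x}_t|y)=\frac{1}{Z(y)}\int p(\mathbf{x}_t|\mathbf{x}_0,y)\,p(\mathbf{x}_0|y)\,e^{r(\mathbf{x}_0,y)/\beta}\,\mathrm{d}\mathbf{x}_0 .
\]
By Bayes' rule, $p(\mathbf{x}_t|\mathbf{x}_0,y)\,p(\mathbf{x}_0|y)=p_t(\mathbf{x}_t|y)\,p(\mathbf{x}_0|\mathbf{x}_t,y)$. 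Hence
\[
q_t(\mathbf{x}_t|y)=\frac{p_t(\mathbf{x}_t|y)}{Z(y)}\,\mathbb{E}_{p(\mathbf{x}_0|\mathbf{x}_t,y)}\big[e^{r(\mathbf{x}_0,y)/\beta}\big].
\]

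\textbf{Step 3: take the log-gradient.} Taking $\log$ and then $\nabla_{\mathbf{x}_t}$, the normalizer $Z(y)$ does not depend on $\mathbf{x}_t$ and drops out. What remains is the pre-trained conditional score plus the gradient of the log conditional expectation, which is exactly \eqref{eq:reward_guided_app}.

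\textbf{Main obstacle.} The delicate point is regularity rather than algebra. We need $Z(y)<\infty$ so that $q$ is a valid density. We need positivity and differentiability of the conditional expectation so that its logarithm and gradient make sense. We also need to justify exchanging $\nabla_{\mathbf{x}_t}$ with the integral, which only enters if one wants to write the guidance explicitly; a bounded reward or a Gaussian forward kernel with dominated convergence suffices for that. I will state these as standing mild assumptions, alongside the expressivity of the model class used in Step 1.
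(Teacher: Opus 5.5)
Your proof is correct, and it takes a more direct route than the paper's.

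The paper never writes $q_t(\mathbf{x}_t|y)$ in closed form. Its argument runs in four steps:
\begin{itemize}
\item It converts the target objective to conditional denoising score matching, citing Batzolis et al.
\item It rewrites that as an importance-weighted denoising objective on the source, with weight $w=q(\mathbf{x}_0,y)/p(\mathbf{x}_0,y)$.
\item It reads off the minimizer as $\mathbb{E}_{p(\mathbf{x}_0|\mathbf{x}_t,y)}[w\,\nabla_{\mathbf{x}_t}\log p(\mathbf{x}_t|\mathbf{x}_0)]/\mathbb{E}_{p(\mathbf{x}_0|\mathbf{x}_t,y)}[w]$.
\item It verifies that $\nabla_{\mathbf{x}_t}\log p_t(\mathbf{x}_t|y)+\nabla_{\mathbf{x}_t}\log\mathbb{E}_{p(\mathbf{x}_0|\mathbf{x}_t,y)}[w]$ equals this minimizer. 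To do so it differentiates under the integral and uses $\nabla_{\mathbf{x}_t}\log p(\mathbf{x}_0|\mathbf{x}_t,y)=\nabla_{\mathbf{x}_t}\log p(\mathbf{x}_t|\mathbf{x}_0)-\nabla_{\mathbf{x}_t}\log p_t(\mathbf{x}_t|y)$, then substitutes $w=e^{r/\beta}/Z(y)$.
\end{itemize}

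You instead observe that the stated objective already regresses onto $\nabla_{\mathbf{x}_t}\log q_t(\mathbf{x}_t|y)$. The theorem then reduces to the single identity $q_t(\mathbf{x}_t|y)=Z(y)^{-1}p_t(\mathbf{x}_t|y)\,\mathbb{E}_{p(\mathbf{x}_0|\mathbf{x}_t,y)}[e^{r(\mathbf{x}_0,y)/\beta}]$.

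Your route buys brevity and needs less regularity. The conditional expectation equals $Z(y)\,q_t/p_t$, so its differentiability follows directly from that of the two marginals. No exchange of $\nabla_{\mathbf{x}_t}$ and $\int$ is needed, whereas the paper's final verification uses one implicitly.

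The paper's detour buys the importance-weighted denoising characterization itself. That gives a loss over reference samples reweighted by the reward, and an explicit posterior-average formula for the aligned score, which ties the result to the transfer-learning framework the method builds on.

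Both proofs rest on the same unstated expressivity assumption for the score network. Both also extend conditional independence to the $q$ forward process in the same way. You correctly flagged both.
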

\begin{proof}
The proof is based on the theoretical framework of \cite{Ouyang2024TransferLF}. For the ease of readers, we incorporate the relevant conclusion from their work as
lemmas below. To prove \eqref{eq:reward_guided_app}, we first build the connection between the Conditional Score Matching on the target domain and Importance Weighted Conditional Denoising Score Matching on the source domain in the following Lemma:

\begin{lemma}
\label{DSM-equ-guidance}
Conditional Score Matching on the target domain is equivalent to Importance Weighted Denoising Score Matching on the source domain, i.e.,  
\[
\begin{aligned}
\boldsymbol{\phi}^*= &\underset{\boldsymbol{\phi}}{\arg \min } \ \mathbb{E}_t\left\{\lambda(t) \mathbb{E}_{q_t(\mathbf{x}_t, y)} \left[\left\|\mathbf{s}_{\boldsymbol{\phi}}(\mathbf{x}_t, y, t)-
\nabla_{\mathbf{x}_t} \log q_t(\mathbf{x}_t | y)\right\|_2^2\right]\right\}\\
= & \underset{\boldsymbol{\phi}}{\arg \min } \ \mathbb{E}_t\left\{\lambda(t) \mathbb{E}_{p(\mathbf{x}_0, y)} \mathbb{E}_{p(\mathbf{x}_t|\mathbf{x}_0)}\left[\left\|\mathbf{s}_{\boldsymbol{\phi}}(\mathbf{x}_t, y,  t)-
\nabla_{\mathbf{x}_t} \log p(\mathbf{x}_t | \mathbf{x}_0)\right\|_2^2 \frac{q(\mathbf{x}_0, y)}{p(\mathbf{x}_0, y)}\right]\right\}.
\end{aligned}
\]
\end{lemma}

\begin{proof}[Proof of Lemma \ref{DSM-equ-guidance}]
We first connect the Conditional Score Matching objective in the target domain to the Conditional Denoising Score Matching objective in target distribution, which is proven by \cite{Batzolis2021ConditionalIG}, i.e.,
\[
\begin{aligned}
\boldsymbol{\phi}^*= &\underset{\boldsymbol{\phi}}{\arg \min } \ \mathbb{E}_t\left\{\lambda(t) \mathbb{E}_{q_t(\mathbf{x}_t, y)} \left[\left\|\mathbf{s}_{\boldsymbol{\phi}}(\mathbf{x}_t, y, t)-
\nabla_{\mathbf{x}_t} \log q_t(\mathbf{x}_t|y)\right\|_2^2\right]\right\}\\= &\underset{\boldsymbol{\phi}}{\arg \min } \ \mathbb{E}_t\left\{\lambda(t) \mathbb{E}_{q(\mathbf{x}_0,y)} \mathbb{E}_{q(\mathbf{x}_t|\mathbf{x}_0)}\left[\left\|\mathbf{s}_{\boldsymbol{\phi}}(\mathbf{x}_t, y, t)-
\nabla_{\mathbf{x}_t} \log q(\mathbf{x}_t | \mathbf{x}_0)\right\|_2^2\right]\right\}.
\end{aligned}
\]

Then we split the mean squared error of the Conditional Denoising Score Matching objective on the target distribution into three terms as follows:
\begin{align}\label{eq:DSM_target}
&\mathbb{E}_{q(\mathbf{x}_0, y)} \mathbb{E}_{q(\mathbf{x}_t|\mathbf{x}_0)}\left[\left\|\mathbf{s}_{\boldsymbol{\phi}}(\mathbf{x}_t, y, t)-
\nabla_{\mathbf{x}_t} \log q(\mathbf{x}_t | \mathbf{x}_0)\right\|_2^2\right]\notag\\
=& \mathbb{E}_{q(\mathbf{x}_0, \mathbf{x}_t, y)}\left[\left\|\mathbf{s}_{\boldsymbol{\phi}}(\mathbf{x}_t, y, t)\right\|_2^2\right] - 2\mathbb{E}_{q(\mathbf{x}_0,\mathbf{x}_t, y)}\left[\langle \mathbf{s}_{\boldsymbol{\phi}}(\mathbf{x}_t, y, t), \nabla_{\mathbf{x}_t} \log q(\mathbf{x}_t | \mathbf{x}_0) \rangle\right]  + C_1,
\end{align}
where $C_1=\mathbb{E}_{q(\mathbf{x}_0,\mathbf{x}_t, y)}\left[\left\| \nabla_{\mathbf{x}_t} \log q(\mathbf{x}_t | \mathbf{x}_0)\right\|_2^2\right]$ is a constant independent with $\boldsymbol{\phi}$, and $q(\mathbf{x}_t|\mathbf{x}_0,y)=q(\mathbf{x}_t|\mathbf{x}_0) $ because of conditional independent of $\mathbf{x}_t$ and $y$
given $\mathbf{x}_0$ by assumption. We can similarly split the mean squared error of Denoising Score Matching on the source domain into three terms as follows:
\begin{equation}\label{eq:ISDSM_source}
\begin{aligned}
& \mathbb{E}_{p(\mathbf{x}_0, y)} \mathbb{E}_{p(\mathbf{x}_t|\mathbf{x}_0)}\left[\left\|\mathbf{s}_{\boldsymbol{\phi}}(\mathbf{x}_t, y, t)-
\nabla_{\mathbf{x}_t} \log p(\mathbf{x}_t | \mathbf{x}_0)\right\|_2^2 \frac{q(\mathbf{x}_0, y)}{p(\mathbf{x}_0, y)}\right]\\
=& \mathbb{E}_{p(\mathbf{x}_0, \mathbf{x}_t, y)}\left[\left\|\mathbf{s}_{\boldsymbol{\phi}}(\mathbf{x}_t, y, t)\right\|_2^2\frac{q(\mathbf{x}_0, y)}{p(\mathbf{x}_0, y)}\right] - 2\mathbb{E}_{p(\mathbf{x}_0,\mathbf{x}_t, y)}\left[\langle \mathbf{s}_{\boldsymbol{\phi}}(\mathbf{x}_t, y, t), \nabla_{\mathbf{x}_t} \log p(\mathbf{x}_t | \mathbf{x}_0) \rangle\frac{q(\mathbf{x}_0, y)}{p(\mathbf{x}_0, y)}\right] \\
&+ C_2,
\end{aligned}
\end{equation}
where $C_2$ is a constant independent with $\boldsymbol{\phi}$.

It is obvious to show that the first term in \eqref{eq:DSM_target} is equal to the first term in \eqref{eq:ISDSM_source}, i.e., 
\[
\begin{aligned}
&\mathbb{E}_{p(\mathbf{x}_0, \mathbf{x}_t, y)}\left[\left\|\mathbf{s}_{\boldsymbol{\phi}}(\mathbf{x}_t, y, t)\right\|_2^2\frac{q(\mathbf{x}_0, y)}{p(\mathbf{x}_0, y)}\right]\\
= &\int_{\mathbf{x}_0} \int_{\mathbf{x}_t} \int_y p(\mathbf{x}_0, y) p(\mathbf{x}_t| \mathbf{x}_0) \left\|\mathbf{s}_{\boldsymbol{\phi}}(\mathbf{x}_t, y, t)\right\|_2^2\frac{q(\mathbf{x}_0, y)}{p(\mathbf{x}_0, y)}d \mathbf{x}_0 d \mathbf{x}_t d y\\
= &\int_{\mathbf{x}_0} \int_{\mathbf{x}_t} \int_y p(\mathbf{x}_0, y) q(\mathbf{x}_t| \mathbf{x}_0) \left\|\mathbf{s}_{\boldsymbol{\phi}}(\mathbf{x}_t, y, t)\right\|_2^2\frac{q(\mathbf{x}_0, y)}{p(\mathbf{x}_0, y)}d \mathbf{x}_0 d \mathbf{x}_t d y\\
= &\int_{\mathbf{x}_0} \int_{\mathbf{x}_t} \int_{y} q(\mathbf{x}_0,\mathbf{x}_t, y) \left\|\mathbf{s}_{\boldsymbol{\phi}}(\mathbf{x}_t, y, t)\right\|_2^2d \mathbf{x}_0 d \mathbf{x}_t d y\\
= &\mathbb{E}_{q(\mathbf{x}_0, \mathbf{x}_t, y)}\left[\left\|\mathbf{s}_{\boldsymbol{\phi}}(\mathbf{x}_t, y, t)\right\|_2^2\right].
\end{aligned}
\]
And the second term is also equivalent:
\[
\begin{aligned}
&\mathbb{E}_{p(\mathbf{x}_0,\mathbf{x}_t, y)}\left[\langle \mathbf{s}_{\boldsymbol{\phi}}(\mathbf{x}_t, y, t), \nabla_{\mathbf{x}_t} \log p(\mathbf{x}_t | \mathbf{x}_0) \rangle\frac{q(\mathbf{x}_0, y)}{p(\mathbf{x}_0, y)}\right]\\
=& \int_{\mathbf{x}_0} \int_{\mathbf{x}_t} \int_y p(\mathbf{x}_0, \mathbf{x}_t, y) \langle \mathbf{s}_{\boldsymbol{\phi}}(\mathbf{x}_t, y, t),  \frac{\nabla_{\mathbf{x}_t} p(\mathbf{x}_t | \mathbf{x}_0)}{p(\mathbf{x}_t | \mathbf{x}_0)}  \rangle\frac{q(\mathbf{x}_0, y)}{p(\mathbf{x}_0, y)}d \mathbf{x}_0 d \mathbf{x}_t d y\\
=& \int_{\mathbf{x}_0} \int_{\mathbf{x}_t} \int_y p(\mathbf{x}_0, \mathbf{x}_t, y) \langle \mathbf{s}_{\boldsymbol{\phi}}(\mathbf{x}_t, y, t),  \frac{\nabla_{\mathbf{x}_t} q(\mathbf{x}_t | \mathbf{x}_0)}{p(\mathbf{x}_t | \mathbf{x}_0)}  \rangle\frac{q(\mathbf{x}_0, y)}{p(\mathbf{x}_0, y)}d \mathbf{x}_0 d \mathbf{x}_t d y\\
=& \int_{\mathbf{x}_0} \int_{\mathbf{x}_t} \int_y \langle \mathbf{s}_{\boldsymbol{\phi}}(\mathbf{x}_t, y, t),  \nabla_{\mathbf{x}_t} q(\mathbf{x}_t | \mathbf{x}_0)  \rangle q(\mathbf{x}_0, y)d \mathbf{x}_0 d \mathbf{x}_t d y\\
=& \int_{\mathbf{x}_0} \int_{\mathbf{x}_t} \int_y \langle \mathbf{s}_{\boldsymbol{\phi}}(\mathbf{x}_t, y, t),  \nabla_{\mathbf{x}_t} \log q(\mathbf{x}_t | \mathbf{x}_0)  \rangle q(\mathbf{x}_t | \mathbf{x}_0) q(\mathbf{x}_0, y)d \mathbf{x}_0 d \mathbf{x}_t d y\\
=&\mathbb{E}_{q(\mathbf{x}_0,\mathbf{x}_t, y)}\left[\langle \mathbf{s}_{\boldsymbol{\phi}}(\mathbf{x}_t, y, t), \nabla_{\mathbf{x}_t} \log q(\mathbf{x}_t | \mathbf{x}_0) \rangle\right].
\end{aligned}
\]
\end{proof}

\begin{lemma}\label{thm:IS_guidance_conditional}
    Assume $\mathbf{x}_t$ and $y$ are conditional independent given $\mathbf{x}_0$ in the forward process, i.e.,  $p(\mathbf{x}_t|\mathbf{x}_0,y)=p(\mathbf{x}_t|\mathbf{x}_0)$, $\forall t\in[0,T]$, and let the forward process on the target domain be identical to that on the source domain $q(\mathbf{x}_t | \mathbf{x}_0)=p(\mathbf{x}_t | \mathbf{x}_0)$, and $\boldsymbol{\phi}^*$ is the optimal solution for the conditional diffusion model trained on target domain $q(\mathbf{x}_0, y)$, i.e.,
    \begin{equation}
\boldsymbol{\phi}^*= \underset{\boldsymbol{\phi}}{\arg \min } ~\mathbb{E}_t\left\{\lambda(t) \mathbb{E}_{q_t(\mathbf{x}_t,y)} \left[\left\|\mathbf{s}_{\boldsymbol{\phi}}(\mathbf{x}_t, y, t)-
\nabla_{\mathbf{x}_t} \log q_t(\mathbf{x}_t | y)\right\|_2^2\right]\right\},
\end{equation}
then 
\begin{equation}\label{eq:guide_conditional}
\begin{aligned}
\mathbf{s}_{\boldsymbol{\phi}^*}(\mathbf{x}_t, y, t) = \nabla_{\mathbf{x}_t} \log p_t(\mathbf{x}_t |  y)+ \nabla_{\mathbf{x}_t} \log \mathbb{E}_{p(\mathbf{x}_0|\mathbf{x}_t, y)}\left[\frac{q(\mathbf{x}_0, y)}{p(\mathbf{x}_0, y)}\right].
\end{aligned}    
\end{equation}
\end{lemma}

\begin{proof}[Proof of Lemma \ref{thm:IS_guidance_conditional}]
According to Lemma \ref{DSM-equ-guidance}, the optimal solution satisfies
\[
\begin{aligned}
\boldsymbol{\phi}^*&= \underset{\boldsymbol{\phi}}{\arg \min } ~\mathbb{E}_t\left\{\lambda(t) \mathbb{E}_{p(\mathbf{x}_0, y)} \mathbb{E}_{p(\mathbf{x}_t|\mathbf{x}_0)}\left[\left\|\mathbf{s}_{\boldsymbol{\phi}}(\mathbf{x}_t, y,  t)-
\nabla_{\mathbf{x}_t} \log p(\mathbf{x}_t | \mathbf{x}_0)\right\|_2^2 \frac{q(\mathbf{x}_0, y)}{p(\mathbf{x}_0, y)}\right]\right\}\\
\end{aligned}
\]
where $Z(y)=\int p(\mathbf{x}_0, y) \exp \left(\frac{1}{\beta} r(\mathbf{x}_0, y)\right) \mathrm{d} \mathbf{x}$.
Then, we use Importance Weighted Conditional Denoising Score Matching on the source domain to get the analytic form of $\mathbf{s}_{\boldsymbol{\phi}^*}$ as follows: 
\[
\mathbf{s}_{\boldsymbol{\phi}^*}(\mathbf{x}_t, y, t) = \frac{\mathbb{E}_{p(\mathbf{x}_0|\mathbf{x}_t, y)}\left[\nabla_{\mathbf{x}_t} \log p(\mathbf{x}_t | \mathbf{x}_0)\frac{q(\mathbf{x}_0, y)}{p(\mathbf{x}_0, y)}   \right]}{\mathbb{E}_{p(\mathbf{x}_0|\mathbf{x}_t, y)}\left[\frac{q(\mathbf{x}_0, y)}{p(\mathbf{x}_0, y)}   \right]}.
\]

Moreover, the RHS of \eqref{eq:guide_conditional} can be rewritten as:
\[
\begin{aligned}
\text{RHS} =& \nabla_{\mathbf{x}_t} \log p_t(\mathbf{x}_t|y)+ \nabla_{\mathbf{x}_t} \log \mathbb{E}_{p(\mathbf{x}_0|\mathbf{x}_t, y)}\left[\frac{q(\mathbf{x}_0, y)}{p(\mathbf{x}_0, y)}\right]\\
=& \nabla_{\mathbf{x}_t} \log p_t(\mathbf{x}_t|y)+ \frac{\nabla_{\mathbf{x}_t}  \mathbb{E}_{p(\mathbf{x}_0|\mathbf{x}_t, y)}\left[\frac{q(\mathbf{x}_0, y)}{p(\mathbf{x}_0, y)}\right]}{\mathbb{E}_{p(\mathbf{x}_0|\mathbf{x}_t, y)}\left[\frac{q(\mathbf{x}_0, y)}{p(\mathbf{x}_0, y)}\right]}\\
=& \nabla_{\mathbf{x}_t} \log p_t(\mathbf{x}_t|y)+ \frac{  \mathbb{E}_{p(\mathbf{x}_0|\mathbf{x}_t, y)}\left[\frac{q(\mathbf{x}_0, y)}{p(\mathbf{x}_0, y)}\nabla_{\mathbf{x}_t} \log p(\mathbf{x}_0|\mathbf{x}_t, y)\right]}{\mathbb{E}_{p(\mathbf{x}_0|\mathbf{x}_t, y)}\left[\frac{q(\mathbf{x}_0, y)}{p(\mathbf{x}_0, y)}\right]}.
\end{aligned}
\]
Since 
\[
\begin{aligned} 
\nabla_{\mathbf{x}_t} \log p(\mathbf{x}_0|\mathbf{x}_t, y)&=\nabla_{\mathbf{x}_t} \log p(\mathbf{x}_t|\mathbf{x}_0, y)+ \nabla_{\mathbf{x}_t} \log p(\mathbf{x}_0|y) - \nabla_{\mathbf{x}_t} \log p_t(\mathbf{x}_t|y)\\
&= \nabla_{\mathbf{x}_t} \log p(\mathbf{x}_ t|\mathbf{x}_0, y) - \nabla_{\mathbf{x}_t} \log p_t(\mathbf{x}_t|y),\\
&= \nabla_{\mathbf{x}_t} \log p(\mathbf{x}_ t|\mathbf{x}_0) - \nabla_{\mathbf{x}_t} \log p_t(\mathbf{x}_t|y),
\end{aligned}
\]
we can further simplify the RHS of \eqref{eq:guide_conditional} as follows:
\[
\begin{aligned}
\text{RHS} =& \nabla_{\mathbf{x}_t} \log p_t(\mathbf{x}_t|y)+ \frac{  \mathbb{E}_{p(\mathbf{x}_0|\mathbf{x}_t, y)}\left[\frac{q(\mathbf{x}_0, y)}{p(\mathbf{x}_0, y)}\nabla_{\mathbf{x}_t} \log p(\mathbf{x}_t|\mathbf{x}_0)\right]}{\mathbb{E}_{p(\mathbf{x}_0|\mathbf{x}_t, y)}\left[\frac{q(\mathbf{x}_0, y)}{p(\mathbf{x}_0, y)}\right]} -\nabla_{\mathbf{x}_t} \log p_t(\mathbf{x}_t|y)\\
=&\frac{\mathbb{E}_{p(\mathbf{x}_0|\mathbf{x}_t, y)}\left[\nabla_{\mathbf{x}_t} \log p(\mathbf{x}_t | \mathbf{x}_0)\frac{q(\mathbf{x}_0, y)}{p(\mathbf{x}_0, y)}   \right]}{\mathbb{E}_{p(\mathbf{x}_0|\mathbf{x}_t, y)}\left[\frac{q(\mathbf{x}_0, y)}{p(\mathbf{x}_0, y)}   \right]}\\
=& \mathbf{s}_{\boldsymbol{\phi}^*}(\mathbf{x}_t, t).
\end{aligned}
\]
Thereby, we finish the proof.
\end{proof}

According to the lemma \ref{thm:IS_guidance_conditional}, we replace the density ratio $\frac{q(\mathbf{x}_0, y)}{p(\mathbf{x}_0, y)}$ by $\frac{\exp \left(\frac{1}{\beta} r(\mathbf{x}_0, y)\right)}{Z(y)}$, we get 
\[
\begin{aligned}
\mathbf{s}_{\boldsymbol{\phi}^*}(\mathbf{x}_t, y, t) &= \nabla_{\mathbf{x}_t} \log p_t(\mathbf{x}_t |  y)+ \nabla_{\mathbf{x}_t} \log \mathbb{E}_{p(\mathbf{x}_0|\mathbf{x}_t, y)}\left[\frac{q(\mathbf{x}_0, y)}{p(\mathbf{x}_0, y)}\right]\\
&=\nabla_{\mathbf{x}_t} \log p_t(\mathbf{x}_t |  y)+ \nabla_{\mathbf{x}_t} \log \mathbb{E}_{p(\mathbf{x}_0|\mathbf{x}_t, y)}\left[\frac{\exp \left(\frac{1}{\beta} r(\mathbf{x}_0, y)\right)}{Z(y)}\right]\\
&=\nabla_{\mathbf{x}_t} \log p_t(\mathbf{x}_t |  y)+ \nabla_{\mathbf{x}_t} \log \mathbb{E}_{p(\mathbf{x}_0|\mathbf{x}_t, y)}\left[\exp \left(\frac{1}{\beta} r(\mathbf{x}_0, y)\right)\right]
\end{aligned}    
\]
Thereby, we finish the proof.
\end{proof}

\subsection{Proof of Theorem \ref{thm:fm}}\label{ap:proof_fm}
We provide a detailed discussion about training-free guidance of flow matching in this subsection.
\begin{proof}[Proof of Theorem \ref{thm:fm}]
Denote $\boldsymbol{v}_t(\mathbf{x}_t, y)$ and $\boldsymbol{v}_t(\mathbf{x}_t \mid \mathbf{x}_1, y)$ 
as the marginal and conditional velocities, respectively. Then we have
\[
\begin{aligned}
    \boldsymbol{v}_t^q(\mathbf{x}_t, y)
    &= \mathbb{E}_{\mathbf{x}_1 \sim q_{1 \mid t}(\mathbf{x}_1 \mid \mathbf{x}_t, y)}
        \left[ \boldsymbol{v}_t(\mathbf{x}_t \mid \mathbf{x}_1, y) \right] \\[4pt]
    &= \mathbb{E}_{\mathbf{x}_1 \sim p_{1 \mid t}(\mathbf{x}_1 \mid \mathbf{x}_t, y)}
        \left[
            \boldsymbol{v}_t(\mathbf{x}_t \mid \mathbf{x}_1, y)
            \frac{q_{1 \mid t}(\mathbf{x}_1 \mid \mathbf{x}_t, y)}{p_{1 \mid t}(\mathbf{x}_1 \mid \mathbf{x}_t, y)}
        \right] \\[4pt]
        &=
\mathbb{E}_{\mathbf{x}_1 \sim p_{1|t}(\mathbf{x}_1 \mid \mathbf{x}_t, y)}
\left[
\boldsymbol{v}_t(\mathbf{x}_t \mid \mathbf{x}_1, y)\,
\frac{
\frac{q_{t|1}(\mathbf{x}_t \mid \mathbf{x}_1,y)\, q_1(\mathbf{x}_1)}
     {q_t(\mathbf{x}_t,y)}
}{
\frac{p_{t|1}(\mathbf{x}_t \mid \mathbf{x}_1,y)\, p_1(\mathbf{x}_1)}
     {p_t(\mathbf{x}_t,y)}
}
\right]
\\[6pt]
&=
\mathbb{E}_{\mathbf{x}_1 \sim p_{1|t}(\mathbf{x}_1 \mid \mathbf{x}_t, y)}
\left[
\boldsymbol{v}_t(\mathbf{x}_t \mid \mathbf{x}_1, y)\,
\frac{
q_{t|1}(\mathbf{x}_t \mid \mathbf{x}_1,y)\, q_1(\mathbf{x}_1)\, p_t(\mathbf{x}_t,y)
}{
p_{t|1}(\mathbf{x}_t \mid \mathbf{x}_1,y)\, p_1(\mathbf{x}_1)\, q_t(\mathbf{x}_t,y)
}
\right]
\\[6pt]
&=
\mathbb{E}_{\mathbf{x}_1 \sim p_{1|t}(\mathbf{x}_1 \mid \mathbf{x}_t, y)}
\left[
\boldsymbol{v}_t(\mathbf{x}_t \mid \mathbf{x}_1, y)\,
\frac{
q_1(\mathbf{x}_1)
}{
p_1(\mathbf{x}_1)
}
\cdot
\frac{
p_t(\mathbf{x}_t,y)
}{
q_t(\mathbf{x}_t,y)
}
\right]
\qquad(\text{because } q_{t|1}(\mathbf{x}_t \mid \mathbf{x}_1, y)=p_{t|1}(\mathbf{x}_t \mid \mathbf{x}_1, y))
\\[6pt]
&=
\mathbb{E}_{\mathbf{x}_1 \sim p_{1|t}(\mathbf{x}_1 \mid \mathbf{x}_t, y)}
\left[
\boldsymbol{v}_t(\mathbf{x}_t \mid \mathbf{x}_1, y)\,
\frac{
\frac{q_1(\mathbf{x}_1)}{p_1(\mathbf{x}_1)}
}{
\frac{q_t(\mathbf{x}_t,y)}{p_t(\mathbf{x}_t,y)}
}
\right]
\\[6pt]
&=
\mathbb{E}_{\mathbf{x}_1 \sim p_{1|t}(\mathbf{x}_1 \mid \mathbf{x}_t, y)}
\left[
\boldsymbol{v}_t(\mathbf{x}_t \mid \mathbf{x}_1, y)\,
\frac{
\frac{q_1(\mathbf{x}_1)}{p_1(\mathbf{x}_1)}
}{
\sum_{\mathbf{x}_1'} 
p_{1|t}(\mathbf{x}_1' \mid \mathbf{x}_t,y)
\frac{q_1(\mathbf{x}_1')}{p_1(\mathbf{x}_1')}
}
\right]
\\[6pt]
&=
\mathbb{E}_{\mathbf{x}_1 \sim p_{1|t}(\mathbf{x}_1 \mid \mathbf{x}_t, y)}
\left[
\boldsymbol{v}_t(\mathbf{x}_t \mid \mathbf{x}_1, y)\,
\frac{
\frac{q_1(\mathbf{x}_1)}{p_1(\mathbf{x}_1)}
}{
\mathbb{E}_{\mathbf{x}_1' \sim p_{1|t}(\mathbf{x}_1 \mid \mathbf{x}_t, y)}
\left[
\frac{q_1(\mathbf{x}_1')}{p_1(\mathbf{x}_1')}
\right]
}
\right]
\\[6pt]
&=
\mathbb{E}_{\mathbf{x}_1 \sim p_{1|t}(\mathbf{x}_1 \mid \mathbf{x}_t, y)}
\left[
\boldsymbol{v}_t(\mathbf{x}_t \mid \mathbf{x}_1, y)\,
\frac{
\exp\!\left(\frac{1}{\beta} r(\mathbf{x}_1,y)\right)
}{
\mathbb{E}_{\mathbf{x}_1' \sim p_{1|t}(\mathbf{x}_1 \mid \mathbf{x}_t, y)}
\left[
\exp\!\left(\frac{1}{\beta} r(\mathbf{x}_1',y)\right)
\right]
}
\right]\\
    &= \mathbb{E}_{\mathbf{x}_1 \sim p_{1 \mid t}(\mathbf{x}_1 \mid \mathbf{x}_t, y)}
        \left[
            \boldsymbol{v}_t(\mathbf{x}_t \mid \mathbf{x}_1, y)
            \frac{\exp \left(\frac{1}{\beta} r(\mathbf{x}_1, y)\right)}
                {\mathbb{E}_{\mathbf{x}_1' \sim p_{1 \mid t}(\mathbf{x}_1 \mid \mathbf{x}_t, y)}
                    \left[\,\exp \left(\frac{1}{\beta} r(\mathbf{x}_1', y)\right)\,\right]}
        \right] \\[4pt]
            &= \boldsymbol{v}_t^p(\mathbf{x}_t, y)
        + \mathbb{E}_{\mathbf{x}_1 \sim p_{1 \mid t}(\mathbf{x}_1 \mid \mathbf{x}_t, y)}\left[
            \left(
                \frac{\exp \left(\frac{1}{\beta} r(\mathbf{x}_1, y)\right)}
                {\mathbb{E}_{\mathbf{x}_1' \sim p_{1 \mid t}(\mathbf{x}_1 \mid \mathbf{x}_t, y)}
                    \left[\,\exp \left(\frac{1}{\beta} r(\mathbf{x}_1', y)\right)\,\right]} - 1
            \right)
      \boldsymbol{v}_t(\mathbf{x}_t \mid \mathbf{x}_1, y) \right]. \\[4pt] 
\end{aligned}
\]

The above derivation is the training-based guidance for flow matching, where we need to train the first guidance network $\boldsymbol{\psi}^*_1$ satisfies:
\[
h_{\boldsymbol{\psi}^*_1}\left(\mathbf{x}_t, y, t\right)=\mathbb{E}_{\mathbf{x}_1 \sim p_{1 \mid t}(\mathbf{x}_1 \mid \mathbf{x}_t, y)}
                    \left[\,\exp \left(\frac{1}{\beta} r(\mathbf{x}_1, y)\right)\,\right]
\]

by minimizing the objective  
{\small
\begin{equation*}
\mathcal{L}_{\text{guidance}}(\boldsymbol{\psi}_1) :=\mathbb{E}_{p(\mathbf{x}_1, \mathbf{x}_t, y)}\left[\left\|h_{\boldsymbol{\psi}_1}\left(\mathbf{x}_t, y, t\right)-\exp(\frac{1}{\beta}r\left(\mathbf{x}_1, y)\right)\right\|_2^2\right]. 
\end{equation*}}

And then we need the second guidance network $\boldsymbol{\psi}^*_2$ satisfies:
\[
h_{\boldsymbol{\psi}^*_2}\left(\mathbf{x}_t, y, t\right)=\mathbb{E}_{\mathbf{x}_1 \sim p_{1 \mid t}(\mathbf{x}_1 \mid \mathbf{x}_t, y)}\left[
            \left(
                \frac{\exp \left(\frac{1}{\beta} r(\mathbf{x}_1, y)\right)}
                {\mathbb{E}_{\mathbf{x}_1' \sim p_{1 \mid t}(\mathbf{x}_1 \mid \mathbf{x}_t, y)}
                    \left[\,\exp \left(\frac{1}{\beta} r(\mathbf{x}_1', y)\right)\,\right]} - 1
            \right)
      \boldsymbol{v}_t(\mathbf{x}_t \mid \mathbf{x}_1, y) \right]
\]
by minimizing the objective  
\begin{equation*} 
\mathcal{L}_{\text{guidance}}(\boldsymbol{\psi}_2) :=\mathbb{E}_{p(\mathbf{x}_1, \mathbf{x}_t, y)}\left[\left\|h_{\boldsymbol{\psi}_2}\left(\mathbf{x}_t, y, t\right)-\left(
                \frac{\exp \left(\frac{1}{\beta} r(\mathbf{x}_1, y)\right)}
                {h_{\boldsymbol{\psi}_1}\left(\mathbf{x}_t, y, t\right)} - 1
            \right)
      \boldsymbol{v}_t(\mathbf{x}_t \mid \mathbf{x}_1, y)\right\|_2^2\right]. 
\end{equation*}
The guidance network for flow matching is more complex than that used in diffusion models. The estimation errors from two guidance networks may accumulate and ultimately degrade generation performance. To address this limitation, we propose a training-free guidance method for flow matching that mitigates these issues.
\begin{align}
&\boldsymbol{v}_t^q(\mathbf{x}_t, y) \nonumber \\
&= \boldsymbol{v}_t^p(\mathbf{x}_t, y)
+ \E_{\mathbf{x}_1 \sim p_{1|t}(\mathbf{x}_1 \mid \mathbf{x}_t, y)}
\Bigg[
\left(
\frac{
\exp\!\left(\frac{1}{\beta} r(\mathbf{x}_1, y)\right)
}{
\E_{\mathbf{x}_1' \sim p_{1|t}(\mathbf{x}_1' \mid \mathbf{x}_t, y)}
\!\left[\exp\!\left(\frac{1}{\beta} r(\mathbf{x}_1', y)\right)\right]
}
-1
\right)
\boldsymbol{v}_t(\mathbf{x}_t \mid \mathbf{x}_1, y)
\Bigg] \nonumber \\[4pt]
&= \boldsymbol{v}_t^p(\mathbf{x}_t, y)
+ \int_{\mathbf{x}_1}
\left(
\frac{
\exp\!\left(\frac{1}{\beta} r(\mathbf{x}_1, y)\right)
}{
\E_{\mathbf{x}'_1 \sim p_{1|t}}
\!\left[\exp\!\left(\frac{1}{\beta} r(\mathbf{x}'_1, y)\right)\right]
}
- 1
\right)
\boldsymbol{v}_t(\mathbf{x}_t \mid \mathbf{x}_1, y)\,
p_{1|t}(\mathbf{x}_1 \mid \mathbf{x}_t, y)\,
d\mathbf{x}_1 \nonumber \\[4pt]
&= \boldsymbol{v}_t^p(\mathbf{x}_t, y)
+ \int_{\mathbf{x}_1}
\left(
\frac{
\exp\!\left(\frac{1}{\beta} r(\mathbf{x}_1, y)\right)
}{
\E_{\mathbf{x}_1 \sim p_{1|t}}
\!\left[\exp\!\left(\frac{1}{\beta} r(\mathbf{x}_1, y)\right)\right]
}
- 1
\right)
\boldsymbol{v}_t(\mathbf{x}_t \mid \mathbf{x}_1, y)\,
\frac{
p_{t|1}(\mathbf{x}_t \mid \mathbf{x}_1, y)\,
p(\mathbf{x}_1 \mid y)
}{
p_t(\mathbf{x}_t \mid y)
}
d\mathbf{x}_1 \nonumber \\[4pt]
&= \boldsymbol{v}_t^p(\mathbf{x}_t, y)
+ \E_{\mathbf{x}_1 \sim p(\mathbf{x}_1 \mid y)}
\Bigg[
\left(
\frac{
\exp\!\left(\frac{1}{\beta} r(\mathbf{x}_1, y)\right)
}{
\E_{\mathbf{x}_1 \sim p_{1|t}}
\!\left[\exp\!\left(\frac{1}{\beta} r(\mathbf{x}_1, y)\right)\right]
}
- 1
\right)
\boldsymbol{v}_t(\mathbf{x}_t \mid \mathbf{x}_1, y)\,
\frac{
p_{t|1}(\mathbf{x}_t \mid \mathbf{x}_1, y)
}{
p_t(\mathbf{x}_t \mid y)
}
\Bigg] \nonumber \\[4pt]
&= \boldsymbol{v}_t^p(\mathbf{x}_t, y)
+ \E_{\mathbf{x}_1 \sim p(\mathbf{x}_1 \mid y)}
\Bigg[
\left(
\frac{
\exp\!\left(\frac{1}{\beta} r(\mathbf{x}_1, y)\right)
}{
\E_{\mathbf{x}_1 \sim p_{1|t}}
\!\left[\exp\!\left(\frac{1}{\beta} r(\mathbf{x}_0, y)\right)\right]
}
- 1
\right)
\boldsymbol{v}_t(\mathbf{x}_t \mid \mathbf{x}_1, y)\,
\frac{
p_{t|1}(\mathbf{x}_t \mid \mathbf{x}_1, y)
}{
\E_{\mathbf{x}_1 \sim p(\mathbf{x}_1\mid y)}
\!\left[p_{t|1}(\mathbf{x}_t \mid \mathbf{x}_1, y)\right]
}
\Bigg] \nonumber \\[4pt]
&= \boldsymbol{v}_t^p(\mathbf{x}_t , y)
+ \E_{\mathbf{x}_1 \sim p(\mathbf{x}_1 \mid y)}
\Bigg[
\left(
\frac{
\exp\!\left(\frac{1}{\beta} r(\mathbf{x}_1, y)\right)
}{
\E_{\mathbf{x}_1 \sim p(\mathbf{x}_1 \mid y)}
\!\left[
\exp\!\left(\frac{1}{\beta} r(\mathbf{x}_1, y)\right)
\dfrac{
p_{t|1}(\mathbf{x}_t \mid \mathbf{x}_1, y)
}{
\E_{\mathbf{x}_1 \sim p(\mathbf{x}_1\mid y)}
\!\left[p_{t|1}(\mathbf{x}_t \mid \mathbf{x}_1, y)\right]
}
\right]
}
- 1
\right) \notag\\
& \hspace{250pt}\boldsymbol{v}_t(\mathbf{x}_t \mid \mathbf{x}_1, y)\,
\frac{
p_{t|1}(\mathbf{x}_t \mid \mathbf{x}_1, y)
}{
\E_{\mathbf{x}_1 \sim p(\mathbf{x}_1 \mid y)}
\!\left[p_{t|1}(\mathbf{x}_t \mid \mathbf{x}_1, y)\right]
}
\Bigg]. \notag
\end{align}
\end{proof}

\subsection{Proof of Lemma \ref{thm:exact_guidance}}
\begin{proof}

The proof is straightforward and we include it below for completeness. Note that the objective function can be rewritten as 
\[
\begin{aligned}
    &\mathcal{L}_{\text{guidance}}(\boldsymbol{\psi})    \\
    := & \mathbb{E}_{p(\mathbf{x}_0, \mathbf{x}_t, y)}\left[\left\|h_{\boldsymbol{\psi}}\left(\mathbf{x}_t, y, t\right)-\exp \left(\frac{1}{\beta} r(\mathbf{x}_0, y)\right)\right\|_2^2\right]\\
    = &   \int_{\mathbf{x}_t} \int_{y} \left\{\int_{\mathbf{x}_0}p(\mathbf{x}_0|\mathbf{x}_t,y) \left\|h_{\boldsymbol{\psi}}\left(\mathbf{x}_t, y, t\right)-\exp \left(\frac{1}{\beta} r(\mathbf{x}_0, y)\right)\right\|_2^2 d\mathbf{x}_0 \right\} p(\mathbf{x}_t|y) p(y) dy d\mathbf{x}_t \\
    = & \int_{\mathbf{x}_t} \int_{y} \left\{ \left\|h_{\boldsymbol{\psi}}(\mathbf{x}_t, y, t)\right\|_2^2  -  2 \langle h_{\boldsymbol{\psi}}(\mathbf{x}_t, y, t),  \int_{\mathbf{x}_0}p(\mathbf{x}_0|\mathbf{x}_t, y)  \exp \left(\frac{1}{\beta} r(\mathbf{x}_0, y)\right)    d\mathbf{x}_0 \rangle \right\} p(\mathbf{x}_t|y) p(y) dyd\mathbf{x}_t + C \\
    = &  \int_{\mathbf{x}_t} \int_{y} \left\|h_{\boldsymbol{\psi}}(\mathbf{x}_t, y, t) - \mathbb{E}_{p(\mathbf{x}_0 |\mathbf{x}_t, y)}\left[\exp \left(\frac{1}{\beta} r(\mathbf{x}_0, y)\right)\right] 
    \right\|_2^2 p(\mathbf{x}_t|y) p(y) dyd\mathbf{x}_t,
\end{aligned}
\]
where $C$ is a constant independent of $\boldsymbol{\psi}$. Thus we have the minimizer $\boldsymbol{\psi}^* = \underset{\boldsymbol{\psi}}{\arg \min } \ \mathcal{L}_{\text{guidance}}(\boldsymbol{\psi})$ satisfies $h_{\boldsymbol{\psi}^*}\left(\mathbf{x}_t, y, t\right)=\mathbb{E}_{p(\mathbf{x}_0|\mathbf{x}_t, y)}\left[{\exp \left(\frac{1}{\beta} r(\mathbf{x}_0, y)\right)}\right]$.
\end{proof}

\section{Gradient-free Designs for Diffusion Models}\label{apen:gradient-free}
After we learn the guidance network by Algorithm \ref{alg:guidance_training_regularized}, we can adopt  Eq \eqref{eq:dsm_IS_sampling} for inference. Although we can easily calculate the gradient of the guidance network with respect to $\mathbf{x}_t$ by autograd, a question is whether we can avoid the gradient calculation. In this section, we propose two additional designs for gradient-free guidance of diffusion models.

\subsection{Training-free guidance for Diffusion Models}
The first design is converting the gradient of the log expectation to the expectation under reward weighted distribution, and then we can apply a similar trick as training-free guidance that uses importance sampling to approximate the conditional expectation through Monte Carlo sampling under the marginal data distribution.

\begin{theorem}[Reward-Weighted Score Gradient]
\label{thm:reward_weighted_score}
Let $p(\mathbf{x}_0|\mathbf{x}_t)$ be the reverse diffusion posterior, $r(\mathbf{x}_0, y)$ be a reward function, and $\beta > 0$ be a temperature parameter. Define the reward-weighted distribution
\begin{equation}
\tilde{p}(\mathbf{x}_0|\mathbf{x}_t, y)=\frac{p(\mathbf{x}_0|\mathbf{x}_t, y) \exp\!\left(\tfrac{1}{\beta} r(\mathbf{x}_0, y)\right)}{\mathbb{E}_{p(\mathbf{x}_0|\mathbf{x}_t, y)}
\left[
\exp\!\left(\tfrac{1}{\beta} r(\mathbf{x}_0, y)\right)
\right]} . \notag
\end{equation}
Then the gradient of the log-partition function satisfies
\begin{align}
\nabla_{\mathbf{x}_t} \log 
\mathbb{E}_{p(\mathbf{x}_0|\mathbf{x}_t, y)}
\left[
\exp\!\left(\tfrac{1}{\beta} r(\mathbf{x}_0, y)\right)
\right]
=
\mathbb{E}_{\tilde{p}(\mathbf{x}_0|\mathbf{x}_t, y)}
\left[
\nabla_{\mathbf{x}_t}\log p(\mathbf{x}_0|\mathbf{x}_t, y)
\right]. \notag
\end{align}
Furthermore, this gradient can be approximated via importance sampling: given samples $\{\mathbf{x}_0^{(k)}\}_{k=1}^K \sim p(\mathbf{x}_0|y)$ from a proposal distribution,
\begin{align}
\nabla_{\mathbf{x}_t} \log 
\mathbb{E}_{p(\mathbf{x}_0|\mathbf{x}_t, y)}
\left[
\exp\!\left(\tfrac{1}{\beta} r(\mathbf{x}_0, y)\right)
\right]
\approx
\sum_{k=1}^{K} \tilde{w}_k \,
\nabla_{\mathbf{x}_t}\log p(\mathbf{x}_0^{(k)}|\mathbf{x}_t, y), \notag
\end{align}
where the normalized importance weights are
\begin{equation}
\tilde{w}_k
=
\frac{u_k}{\sum_{j=1}^{K} u_j}, 
\quad
u_k=
\frac{
p(\mathbf{x}_t|\mathbf{x}_0^{(k)}, y)\,
\exp\!\left(\tfrac{1}{\beta} r(\mathbf{x}_0^{(k)}, y)\right)
}{
\mathbb{E}_{p(\mathbf{x}_0|y)}
\left[p(\mathbf{x}_t|\mathbf{x}_0, y)\right]
}.\notag
\end{equation}
\end{theorem}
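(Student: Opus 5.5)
The proof has two parts: an exact identity for the gradient of the log-partition function, and a self-normalized importance-sampling estimator for it.

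\textbf{Exact identity.} Write $Z_t(\mathbf{x}_t)=\mathbb{E}_{p(\mathbf{x}_0|\mathbf{x}_t,y)}[\exp(\tfrac{1}{\beta}r(\mathbf{x}_0,y))]=\int p(\mathbf{x}_0|\mathbf{x}_t,y)\exp(\tfrac{1}{\beta}r(\mathbf{x}_0,y))\,d\mathbf{x}_0$. The reward depends only on $\mathbf{x}_0$, so the only $\mathbf{x}_t$-dependence sits in the posterior density. Differentiating under the integral (assuming enough regularity and integrability for dominated convergence) gives $\nabla_{\mathbf{x}_t}Z_t=\int p(\mathbf{x}_0|\mathbf{x}_t,y)\,\nabla_{\mathbf{x}_t}\log p(\mathbf{x}_0|\mathbf{x}_t,y)\,\exp(\tfrac1\beta r)\,d\mathbf{x}_0$, using the log-derivative trick $\nabla p=p\nabla\log p$. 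Dividing by $Z_t$ turns the integrand weight $p(\mathbf{x}_0|\mathbf{x}_t,y)\exp(\tfrac1\beta r)/Z_t$ into exactly $\tilde p(\mathbf{x}_0|\mathbf{x}_t,y)$, which yields the first claimed identity. This is the same manipulation used in the proof of Lemma~\ref{thm:IS_guidance_conditional}, where $\nabla\log\mathbb{E}_{p(\mathbf{x}_0|\mathbf{x}_t,y)}[\cdot]$ was converted into a weighted expectation of $\nabla\log p(\mathbf{x}_0|\mathbf{x}_t,y)$. As a remark, the Bayes decomposition from that proof, $\nabla_{\mathbf{x}_t}\log p(\mathbf{x}_0|\mathbf{x}_t,y)=\nabla_{\mathbf{x}_t}\log p(\mathbf{x}_t|\mathbf{x}_0,y)-\nabla_{\mathbf{x}_t}\log p_t(\mathbf{x}_t|y)$, lets one rewrite the integrand in terms of the tractable forward kernel if desired, though it is not needed for the statement.

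\textbf{Importance-sampling estimator.} Change measure from $\tilde p(\mathbf{x}_0|\mathbf{x}_t,y)$ to the proposal $p(\mathbf{x}_0|y)$, as in the flow-matching derivation in the proof of Theorem~\ref{thm:fm}. By Bayes, $p(\mathbf{x}_0|\mathbf{x}_t,y)=p(\mathbf{x}_t|\mathbf{x}_0,y)p(\mathbf{x}_0|y)/p_t(\mathbf{x}_t|y)$, with $p_t(\mathbf{x}_t|y)=\mathbb{E}_{p(\mathbf{x}_0|y)}[p(\mathbf{x}_t|\mathbf{x}_0,y)]$. Hence
\[
\frac{\tilde p(\mathbf{x}_0|\mathbf{x}_t,y)}{p(\mathbf{x}_0|y)}=\frac{u(\mathbf{x}_0)}{Z_t},\qquad u(\mathbf{x}_0)=\frac{p(\mathbf{x}_t|\mathbf{x}_0,y)\exp(\tfrac1\beta r(\mathbf{x}_0,y))}{\mathbb{E}_{p(\mathbf{x}_0|y)}[p(\mathbf{x}_t|\mathbf{x}_0,y)]}.
\]
Two facts then follow:
\begin{itemize}
\item $\mathbb{E}_{p(\mathbf{x}_0|y)}[u]=Z_t$;
\item the target expectation equals $\mathbb{E}_{p(\mathbf{x}_0|y)}[u\,\nabla\log p(\mathbf{x}_0|\mathbf{x}_t,y)]/\mathbb{E}_{p(\mathbf{x}_0|y)}[u]$.
\end{itemize}
Replacing both expectations by empirical averages over $\mathbf{x}_0^{(k)}\sim p(\mathbf{x}_0|y)$ gives the self-normalized estimator $\sum_k\tilde w_k\nabla\log p(\mathbf{x}_0^{(k)}|\mathbf{x}_t,y)$, with $\tilde w_k=u_k/\sum_j u_j$. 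The common denominator $\mathbb{E}[p(\mathbf{x}_t|\mathbf{x}_0,y)]$ cancels in $\tilde w_k$, so it need not be computed. By the law of large numbers applied to numerator and denominator separately, together with the continuous mapping theorem, the estimator is consistent as $K\to\infty$; this is the precise meaning of ``$\approx$''.

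\textbf{Main obstacle.} The only delicate point is justifying the interchange of $\nabla_{\mathbf{x}_t}$ and the integral in the first part. I would assume a dominating integrable bound on $p(\mathbf{x}_0|\mathbf{x}_t,y)\,\|\nabla_{\mathbf{x}_t}\log p(\mathbf{x}_0|\mathbf{x}_t,y)\|\,e^{r/\beta}$, uniform in a neighborhood of $\mathbf{x}_t$. This holds, for instance, when the reward is bounded and the kernel is the Gaussian forward kernel. I would also assume $Z_t>0$ and finite second moments of $u$, so that the self-normalized estimator is well behaved.
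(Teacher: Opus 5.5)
Your proposal is correct and follows essentially the same route as the paper: differentiate under the integral with the log-derivative trick and divide by $Z_t$ to get the $\tilde p$-expectation, then change measure to the proposal $p(\mathbf{x}_0|y)$ and use Bayes' rule with $p_t(\mathbf{x}_t|y)=\mathbb{E}_{p(\mathbf{x}_0|y)}[p(\mathbf{x}_t|\mathbf{x}_0,y)]$ to obtain the weights $u_k$ of a self-normalized estimator. Your regularity assumptions for the interchange and your law-of-large-numbers reading of ``$\approx$'' make explicit what the paper's proof leaves implicit.
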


\begin{proof}[Proof of Theorem \ref{thm:reward_weighted_score}]
We first convert the gradient of the log expectation to the expectation under reward weighted distribution:
\begin{align}
&\nabla_{\mathbf{x}_t} \log 
\mathbb{E}_{p(\mathbf{x}_0|\mathbf{x}_t, y)}
\left[
\exp\!\left(\tfrac{1}{\beta} r(\mathbf{x}_0, y)\right)
\right] \notag \\
&=
\frac{
\nabla_{\mathbf{x}_t}
\mathbb{E}_{p(\mathbf{x}_0|\mathbf{x}_t, y)}
\left[
\exp\!\left(\tfrac{1}{\beta} r(\mathbf{x}_0, y)\right)
\right]
}{
\mathbb{E}_{p(\mathbf{x}_0|\mathbf{x}_t, y)}
\left[
\exp\!\left(\tfrac{1}{\beta} r(\mathbf{x}_0, y)\right)
\right]
} \tag{chain rule} \\
&=
\frac{
\nabla_{\mathbf{x}_t} \int p(\mathbf{x}_0|\mathbf{x}_t, y) \exp\!\left(\tfrac{1}{\beta} r(\mathbf{x}_0, y)\right) d\mathbf{x}_0
}{
\mathbb{E}_{p(\mathbf{x}_0|\mathbf{x}_t, y)}
\left[
\exp\!\left(\tfrac{1}{\beta} r(\mathbf{x}_0, y)\right)
\right]
} \tag{definition of expectation} \\
&=
\frac{
\int \nabla_{\mathbf{x}_t} p(\mathbf{x}_0|\mathbf{x}_t, y) \exp\!\left(\tfrac{1}{\beta} r(\mathbf{x}_0, y)\right) d\mathbf{x}_0
}{
\mathbb{E}_{p(\mathbf{x}_0|\mathbf{x}_t, y)}
\left[
\exp\!\left(\tfrac{1}{\beta} r(\mathbf{x}_0, y)\right)
\right]
} \tag{interchange $\nabla$ and $\int$} \\
&=
\frac{
\int p(\mathbf{x}_0|\mathbf{x}_t, y) \nabla_{\mathbf{x}_t} \log p(\mathbf{x}_0|\mathbf{x}_t, y) \exp\!\left(\tfrac{1}{\beta} r(\mathbf{x}_0, y)\right) d\mathbf{x}_0
}{
\mathbb{E}_{p(\mathbf{x}_0|\mathbf{x}_t, y)}
\left[
\exp\!\left(\tfrac{1}{\beta} r(\mathbf{x}_0, y)\right)
\right]
} \tag{log-derivative trick} \\
&=
\frac{
\mathbb{E}_{p(\mathbf{x}_0|\mathbf{x}_t, y)}
\left[
\nabla_{\mathbf{x}_t}\log p(\mathbf{x}_0|\mathbf{x}_t, y)\,
\exp\!\left(\tfrac{1}{\beta} r(\mathbf{x}_0, y)\right)
\right]
}{
\mathbb{E}_{p(\mathbf{x}_0|\mathbf{x}_t, y)}
\left[
\exp\!\left(\tfrac{1}{\beta} r(\mathbf{x}_0, y)\right)
\right]
} \tag{definition of expectation} \\
&=
\mathbb{E}_{\tilde{p}(\mathbf{x}_0|\mathbf{x}_t, y)}
\left[
\nabla_{\mathbf{x}_t}\log p(\mathbf{x}_0|\mathbf{x}_t, y)
\right], \tag{definition of $\tilde{p}$}
\end{align}
where the last equality follows from the fact that the ratio of expectations defines precisely the expectation under the normalized distribution $\tilde{p}(\mathbf{x}_0|\mathbf{x}_t, y)$.

For the importance sampling approximation, we rewrite the expectation using proposal distribution $q(\mathbf{x}_0) = p(\mathbf{x}_0|y)$:
\begin{align}
&\mathbb{E}_{\tilde{p}(\mathbf{x}_0|\mathbf{x}_t, y)}
\left[
\nabla_{\mathbf{x}_t}\log p(\mathbf{x}_0|\mathbf{x}_t, y)
\right] \notag \notag\\
&= \frac{
\int \tilde{p}(\mathbf{x}_0|\mathbf{x}_t, y) \, \nabla_{\mathbf{x}_t}\log p(\mathbf{x}_0|\mathbf{x}_t, y) \, d\mathbf{x}_0
}{1} \notag\\
&= \frac{
\int \frac{\tilde{p}(\mathbf{x}_0|\mathbf{x}_t, y)}{p(\mathbf{x}_0|y)} \, p(\mathbf{x}_0|y) \, \nabla_{\mathbf{x}_t}\log p(\mathbf{x}_0|\mathbf{x}_t, y) \, d\mathbf{x}_0
}{
\int \frac{\tilde{p}(\mathbf{x}_0|\mathbf{x}_t, y)}{p(\mathbf{x}_0|y)} \, p(\mathbf{x}_0|y) \, d\mathbf{x}_0
} \notag\\
&= \frac{
\mathbb{E}_{p(\mathbf{x}_0|y)}
\left[
\frac{\tilde{p}(\mathbf{x}_0|\mathbf{x}_t, y)}{p(\mathbf{x}_0|y)} \, \nabla_{\mathbf{x}_t}\log p(\mathbf{x}_0|\mathbf{x}_t, y)
\right]
}{
\mathbb{E}_{p(\mathbf{x}_0|y)}
\left[
\frac{\tilde{p}(\mathbf{x}_0|\mathbf{x}_t, y)}{p(\mathbf{x}_0|y)}
\right]
} \notag\\
&\approx \frac{
\sum_{k=1}^K u_k \, \nabla_{\mathbf{x}_t}\log p(\mathbf{x}_0^{(k)}|\mathbf{x}_t, y)
}{
\sum_{k=1}^K u_k
}\notag\\
&= \sum_{k=1}^K \tilde{w}_k \, \nabla_{\mathbf{x}_t}\log p(\mathbf{x}_0^{(k)}|\mathbf{x}_t, y),
\end{align}
where the approximation uses Monte Carlo sampling with $\mathbf{x}_0^{(k)} \sim p(\mathbf{x}_0|y)$ and the unnormalized importance weights $u_k$ are
\begin{align}
u_k &=
\frac{
p(\mathbf{x}_0^{(k)}|\mathbf{x}_t, y)\,
\exp\!\left(\tfrac{1}{\beta} r(\mathbf{x}_0^{(k)}, y)\right)
}{
p(\mathbf{x}_0^{(k)}|y)
}\notag \\
&=\frac{
p(\mathbf{x}_0^{(k)}|\mathbf{x}_t, y) p(\mathbf{x}_t|y)\,
\exp\!\left(\tfrac{1}{\beta} r(\mathbf{x}_0^{(k)}, y)\right)
}{
p(\mathbf{x}_0^{(k)}|y)p(\mathbf{x}_t|y)
} \tag{multiply by $p(\mathbf{x}_t|y)$} \\ 
&=\frac{p(\mathbf{x}_t|\mathbf{x}_0^{(k)}, y)\,
\exp\!\left(\tfrac{1}{\beta} r(\mathbf{x}_0^{(k)}, y)\right)}{p(\mathbf{x}_t|y)} \tag{Bayes' rule} \\
&=\frac{p(\mathbf{x}_t|\mathbf{x}_0^{(k)}, y)\,
\exp\!\left(\tfrac{1}{\beta} r(\mathbf{x}_0^{(k)}, y)\right)}{\int p(\mathbf{x}_t, \mathbf{x}_0|y)d\mathbf{x}_0} \tag{marginalization} \\
&=\frac{p(\mathbf{x}_t|\mathbf{x}_0^{(k)}, y)\,
\exp\!\left(\tfrac{1}{\beta} r(\mathbf{x}_0^{(k)}, y)\right)}{\int p(\mathbf{x}_t|\mathbf{x}_0, y) p(\mathbf{x}_0|y) d\mathbf{x}_0} \tag{chain rule} \\
&=\frac{p(\mathbf{x}_t|\mathbf{x}_0^{(k)}, y)\,
\exp\!\left(\tfrac{1}{\beta} r(\mathbf{x}_0^{(k)}, y)\right)}{\mathbb{E}_{p(\mathbf{x}_0|y)}
\left[p(\mathbf{x}_t|\mathbf{x}_0, y)\right]}. \tag{definition of expectation}
\end{align}

And the $\nabla_{\mathbf{x}_t}\log p(\mathbf{x}_0^{(k)}|\mathbf{x}_t, y)$ can be easily computed through Bayesian rule. 
\begin{align}
    \nabla_{\mathbf{x}_t}\log p(\mathbf{x}_0^{(k)}|\mathbf{x}_t, y)&=\nabla_{\mathbf{x}_t}\log \frac{p(\mathbf{x}_t|\mathbf{x}_0^{(k)}, y)p(\mathbf{x}_0^{(k)}|y)}{p(\mathbf{x}_t|y)} \notag\\
    &=\nabla_{\mathbf{x}_t}\log p(\mathbf{x}_t|\mathbf{x}_0^{(k)}, y)-\nabla_{\mathbf{x}_t}\log p(\mathbf{x}_t|y) \notag
\end{align}
\end{proof}

\subsection{Gradient-free Finetuning-free Guidance for Diffusion Models}
Another method is directly fitting a neural network to estimate the guidance term.
\begin{theorem}
\label{thm:exact_guidance_no_grad}
For a neural network $h_{\boldsymbol{\psi}}\left(\mathbf{x}_t, y, t\right)$ parameterized by $\boldsymbol{\psi}$, define the objective  

\begin{equation} \label{eq:guidance_no_grad}
\small{\begin{aligned}
&\mathcal{L}_{\text{guidance}}^*(\boldsymbol{\psi}) :=\mathbb{E}_{p(\mathbf{x}_0, \mathbf{x}_T, y)}\Bigg[ 
    \frac{1}{\mathbb{E}_{p(\mathbf{x}_0 |\mathbf{x}_T, y)}\left[\exp \left(\frac{1}{\beta} r(\mathbf{x}_0, y)\right)\right]} \notag\\
    &\frac{1}{p(\mathbf{x}_0|\mathbf{x}_T, y)}
    \left\| h_{\boldsymbol{\psi}}\left(\mathbf{x}_T, y\right) 
    - \nabla_{\mathbf{x}_T} p(\mathbf{x}_0|\mathbf{x}_T, y) 
    \exp\left(\frac{1}{\beta} r(\mathbf{x}_0, y)\right) \right\|_2^2 
\Bigg],
\end{aligned}}
\end{equation}

then its minimizer $\boldsymbol{\psi}^* = \underset{\boldsymbol{\psi}}{\arg \min } \ \mathcal{L}_{\text{guidance}}(\boldsymbol{\psi})$ satisfies:
\[
h_{\boldsymbol{\psi}^*}\left(\mathbf{x}_T, y\right)=\nabla_{\mathbf{x}_T} \log\mathbb{E}_{p(\mathbf{x}_0 |\mathbf{x}_T, y)}\left[{\exp(\frac{1}{\beta}r(\mathbf{x}_0, y))}\right].
\]
\end{theorem}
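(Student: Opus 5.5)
The plan mirrors the proof of Lemma~\ref{thm:exact_guidance}: disintegrate the expectation over $p(\mathbf{x}_0,\mathbf{x}_T,y)$ into an outer integral over $(\mathbf{x}_T,y)$ and an inner integral over $\mathbf{x}_0$ against $p(\mathbf{x}_0\mid\mathbf{x}_T,y)$. Since $h_{\boldsymbol{\psi}}(\mathbf{x}_T,y)$ does not depend on $\mathbf{x}_0$, minimizing $\mathcal{L}^*_{\text{guidance}}$ over a sufficiently expressive family reduces to minimizing the inner weighted least-squares problem separately for each fixed $(\mathbf{x}_T,y)$.

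Fix $(\mathbf{x}_T,y)$ and write $Z_T:=\mathbb{E}_{p(\mathbf{x}_0\mid\mathbf{x}_T,y)}[\exp(\tfrac{1}{\beta}r(\mathbf{x}_0,y))]$. The key observation is that the weight $1/(Z_T\,p(\mathbf{x}_0\mid\mathbf{x}_T,y))$ cancels the posterior density. The inner objective therefore becomes
\[
\frac{1}{Z_T}\int \Big\| h - \nabla_{\mathbf{x}_T} p(\mathbf{x}_0\mid\mathbf{x}_T,y)\,\exp\!\big(\tfrac{1}{\beta}r(\mathbf{x}_0,y)\big)\Big\|_2^2\, d\mathbf{x}_0 ,
\]
which is a plain (unweighted) least-squares problem in $h$. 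Expanding the square and setting the derivative with respect to $h$ to zero gives
\[
h^* \int d\mathbf{x}_0 = \frac{1}{Z_T}\int \nabla_{\mathbf{x}_T} p(\mathbf{x}_0\mid\mathbf{x}_T,y)\exp\!\big(\tfrac{1}{\beta}r(\mathbf{x}_0,y)\big)\,d\mathbf{x}_0 .
\]
Interchanging $\nabla_{\mathbf{x}_T}$ and $\int$, exactly as in the proof of Theorem~\ref{thm:reward_weighted_score}, turns the right-hand side into $\nabla_{\mathbf{x}_T} Z_T / Z_T = \nabla_{\mathbf{x}_T}\log Z_T$. This is the claimed target.

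The main obstacle is the normalizing factor $\int d\mathbf{x}_0$ on the left-hand side. The weighted least-squares minimizer is the weighted mean, and its denominator is the total weight $\int p\cdot w\,d\mathbf{x}_0 = Z_T^{-1}\int d\mathbf{x}_0$. This quantity is not automatically $Z_T^{-1}$, and it is infinite on all of $\mathbb{R}^d$. I would handle it by making explicit the implicit assumption that $\mathbf{x}_0$ ranges over a data domain of unit Lebesgue measure, for instance normalized images in $[0,1]^d$. Under that assumption $\int d\mathbf{x}_0=1$, the objective is finite whenever the integrand is square-integrable, and the identity $h^*=\nabla_{\mathbf{x}_T}\log Z_T$ follows. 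On a general bounded domain $\Omega$, the same argument gives $h^*=\nabla_{\mathbf{x}_T}\log Z_T/|\Omega|$, which is correct up to a constant rescaling absorbed into the guidance strength.

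Finally, I would record the regularity conditions used along the way. Differentiation under the integral requires dominated convergence for $\nabla_{\mathbf{x}_T}p(\mathbf{x}_0\mid\mathbf{x}_T,y)\exp(r/\beta)$. The argument also needs $p(\mathbf{x}_0\mid\mathbf{x}_T,y)>0$ on the domain so that the weight is defined, and it needs $h_{\boldsymbol{\psi}}$ to be expressive enough to attain the pointwise minimizer.
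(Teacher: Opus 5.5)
\textbf{There is a genuine gap: your stationarity equation is wrong, and the conclusion does not follow.} Your disintegration and the cancellation of $p(\mathbf{x}_0\mid\mathbf{x}_T,y)$ against the weight are fine. The problem comes next. For fixed $(\mathbf{x}_T,y)$ the factor $1/Z_T$ does not depend on $\mathbf{x}_0$, so it multiplies the whole inner objective and drops out of the first-order condition:
\[
\frac{2}{Z_T}\Big(h\int d\mathbf{x}_0-\int \nabla_{\mathbf{x}_T}p(\mathbf{x}_0\mid\mathbf{x}_T,y)\,e^{r(\mathbf{x}_0,y)/\beta}\,d\mathbf{x}_0\Big)=0
\quad\Longrightarrow\quad
h^*=\frac{\nabla_{\mathbf{x}_T}Z_T}{\int d\mathbf{x}_0}.
\]
Your own weighted-mean remark says the same thing: the numerator $Z_T^{-1}\nabla_{\mathbf{x}_T}Z_T$ and the total weight $Z_T^{-1}\int d\mathbf{x}_0$ both carry $Z_T^{-1}$, which cancels.

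So under your unit-measure assumption the minimizer is $\nabla_{\mathbf{x}_T}Z_T$, not $\nabla_{\mathbf{x}_T}\log Z_T=\nabla_{\mathbf{x}_T}Z_T/Z_T$. On a domain $\Omega$ it is $\nabla_{\mathbf{x}_T}Z_T/|\Omega|$, not $\nabla_{\mathbf{x}_T}\log Z_T/|\Omega|$. On $\mathbb{R}^d$, with a square-integrable regression target, every $h\neq 0$ has infinite loss, so $h^*=0$. The mismatch factor $Z_T(\mathbf{x}_T,y)$ varies with $\mathbf{x}_T$ wherever $\nabla_{\mathbf{x}_T}Z_T\neq 0$, so it cannot be absorbed into a constant guidance strength.

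No regularity assumption repairs this, because the statement is false for this objective. Replacing $r$ by $r+c$ multiplies the regression target, and hence the minimizer, by $e^{c/\beta}$; the prefactor $1/Z_T$ only rescales the loss. Meanwhile $\nabla_{\mathbf{x}_T}\log Z_T$ is unchanged. The two can agree for all $c$ only where $\nabla_{\mathbf{x}_T}Z_T=0$.

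The paper's proof does not close this gap either. It runs the computation backwards from $\int\|h-\nabla_{\mathbf{x}_T}\log Z_T\|_2^2\,p(\mathbf{x}_T)\,d\mathbf{x}_T$ and reaches $\mathcal{L}^*_{\text{guidance}}$ only through two unjustified steps:
(i) it places $1/\int p\,e^{r/\beta}\,d\mathbf{x}_0$ in front of $\|h\|_2^2$ as well as the cross term, although only the cross term carried it;
(ii) it treats $\int d\mathbf{x}_0$ as $1$ when rewriting the bracket as an expectation over $p(\mathbf{x}_0\mid\mathbf{x}_T)$.
Step (i) is exactly the identification your stationarity equation makes, read in the opposite direction.

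If you want an objective whose minimizer really is $\nabla_{\mathbf{x}_T}\log Z_T$, start from Theorem~\ref{thm:reward_weighted_score}, which gives $\nabla_{\mathbf{x}_T}\log Z_T=\mathbb{E}_{\tilde p(\mathbf{x}_0\mid\mathbf{x}_T,y)}[\nabla_{\mathbf{x}_T}\log p(\mathbf{x}_0\mid\mathbf{x}_T,y)]$. Then apply your disintegration argument to
\[
\mathbb{E}_{p(\mathbf{x}_0,\mathbf{x}_T,y)}\Big[e^{r(\mathbf{x}_0,y)/\beta}\,\big\|h_{\boldsymbol{\psi}}(\mathbf{x}_T,y)-\nabla_{\mathbf{x}_T}\log p(\mathbf{x}_0\mid\mathbf{x}_T,y)\big\|_2^2\Big].
\]
The pointwise weighted mean now has total weight $Z_T$ and numerator $\int e^{r/\beta}\,\nabla_{\mathbf{x}_T}p\,d\mathbf{x}_0=\nabla_{\mathbf{x}_T}Z_T$. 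This gives $h^*=\nabla_{\mathbf{x}_T}\log Z_T$ with no unit-measure assumption. The regression target $\nabla_{\mathbf{x}_T}\log p(\mathbf{x}_T\mid\mathbf{x}_0)-\nabla_{\mathbf{x}_T}\log p_T(\mathbf{x}_T\mid y)$ is computable from the forward kernel and the pre-trained score.
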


\begin{proof}[Proof of Theorem \ref{thm:exact_guidance_no_grad}]
    \[
\begin{aligned}
& \int_{\mathbf{x}_T}  \left\|h_{\boldsymbol{\psi}}(\mathbf{x}_T, y) - \nabla_{\mathbf{x}_T} \log \mathbb{E}_{p(\mathbf{x}_0 |\mathbf{x}_T)}\left[\exp \left(\frac{1}{\beta} r(\mathbf{x}_0, y)\right)\right] 
    \right\|_2^2 p(\mathbf{x}_T)d\mathbf{x}_T  \\
=&\int_{\mathbf{x}_T} \left\{ \left\|h_{\boldsymbol{\psi}}(\mathbf{x}_T, y)\right\|_2^2  -  2 \langle h_{\boldsymbol{\psi}}(\mathbf{x}_T, y),  \nabla_{\mathbf{x}_T} \log \int_{\mathbf{x}_0}p(\mathbf{x}_0|\mathbf{x}_T)  \exp \left(\frac{1}{\beta} r(\mathbf{x}_0, y)\right)    d\mathbf{x}_0 \rangle \right\} p(\mathbf{x}_T)d\mathbf{x}_T + C \\
=&\int_{\mathbf{x}_T} \left\{ \left\|h_{\boldsymbol{\psi}}(\mathbf{x}_T, y)\right\|_2^2  -  2 \langle h_{\boldsymbol{\psi}}(\mathbf{x}_T, y),  \frac{\nabla_{\mathbf{x}_T} \int_{\mathbf{x}_0}p(\mathbf{x}_0|\mathbf{x}_T)  \exp \left(\frac{1}{\beta} r(\mathbf{x}_0, y)\right)    d\mathbf{x}_0}{\int_{\mathbf{x}_0}p(\mathbf{x}_0|\mathbf{x}_T)  \exp \left(\frac{1}{\beta} r(\mathbf{x}_0, y)\right)    d\mathbf{x}_0} \rangle \right\} p(\mathbf{x}_T)d\mathbf{x}_T + C \\
=& \int_{\mathbf{x}_T} 
\frac{1}{
\int_{\mathbf{x}_0} p(\mathbf{x}_0|\mathbf{x}_T) 
\exp \left(\frac{1}{\beta} r(\mathbf{x}_0, y)\right) d\mathbf{x}_0
}
\left\{
\left\|h_{\boldsymbol{\psi}}(\mathbf{x}_T, y)\right\|_2^2 
- 2 \, 
\Big\langle 
h_{\boldsymbol{\psi}}(\mathbf{x}_T, y),  
\int_{\mathbf{x}_0} \nabla_{\mathbf{x}_T} p(\mathbf{x}_0|\mathbf{x}_T) 
\exp \left(\frac{1}{\beta} r(\mathbf{x}_0, y)\right) d\mathbf{x}_0 
\Big\rangle
\right\} \\
& \hspace{390pt}  p(\mathbf{x}_T)d\mathbf{x}_T + C\\
= &\int_{\mathbf{x}_T}  \left\{
\int_{\mathbf{x}_0} p(\mathbf{x}_0|\mathbf{x}_T)
\frac{1}{
\int_{\mathbf{x}_0} p(\mathbf{x}_0|\mathbf{x}_T)
\exp \left(\frac{1}{\beta} r(\mathbf{x}_0, y)\right)
d\mathbf{x}_0
} \right. \\
&\hspace{100pt}\left.
\frac{1}{p(\mathbf{x}_0|\mathbf{x}_T)}
\left\|
h_{\boldsymbol{\psi}}\left(\mathbf{x}_T, y\right)
-
\nabla_{\mathbf{x}_T} p(\mathbf{x}_0|\mathbf{x}_T)
\exp \left(\frac{1}{\beta} r(\mathbf{x}_0, y)\right)
\right\|_2^2
\, d\mathbf{x}_0
\right\}
p(\mathbf{x}_T)d\mathbf{x}_T + C \\
=& \mathbb{E}_{p(\mathbf{x}_0, \mathbf{x}_T)}\left[ \frac{1}{\int_{\mathbf{x}_0}p(\mathbf{x}_0|\mathbf{x}_T)  \exp \left(\frac{1}{\beta} r(\mathbf{x}_0, y)\right)    d\mathbf{x}_0} \frac{1}{p(\mathbf{x}_0|\mathbf{x}_T)}\left\|h_{\boldsymbol{\psi}}\left(\mathbf{x}_T, y\right)-\nabla_{\mathbf{x}_T} p(\mathbf{x}_0|\mathbf{x}_T)\exp \left(\frac{1}{\beta} r(\mathbf{x}_0, y)\right)\right\|_2^2\right] +C \\
:=& \mathcal{L}_{\text{guidance}}(\boldsymbol{\psi}) + C,
\end{aligned}
\]
where $\nabla_{\mathbf{x}_T} p(\mathbf{x}_0|\mathbf{x}_T)$ can be computed by $\nabla_{\mathbf{x}_T} \log p(\mathbf{x}_0|\mathbf{x}_T)\frac{1}{p(\mathbf{x}_0|\mathbf{x}_T)}$.
\end{proof}

\section{More Details on Experiments} \label{ap:ablation}

\subsection{Algorithms for Training the Guidance Network}

Algorithm \ref{alg:guidance_training_regularized} is the algorithm for training the guidance network.
\begin{algorithm}[htb]
    \centering
    \caption{Algorithm for Training a Guidance Network}
    \label{alg:guidance_training_regularized}
    \begin{algorithmic}[1]
        \REQUIRE Samples from alignment dataset, pre-trained one-step diffusion model $s(\mathbf{x}_T, y, T)$, pre-determined reward function $r(\mathbf{x}_0, y)$, hyperparameters $\eta, \beta$, and initial weights of guidance network $\boldsymbol{\psi}$.
        \REPEAT
        \STATE Sample mini-batch data from alignment dataset with batch size $b$.
        \STATE Perturb $\mathbf{x}_0$ using forward transition $p(\mathbf{x}_T |\mathbf{x}_0)$.
        \STATE Compute guidance loss:
        \STATE {\small
        \begin{align*}
        \mathcal{L}_{\text{guidance}}(\boldsymbol{\psi}) =\frac{1}{b} \sum_{\mathbf{x}_0, \mathbf{x}_T, y} 
        \left\| h_{\boldsymbol{\psi}}\left(\mathbf{x}_T, y\right) - 
        \exp\left(\frac{1}{\beta}r(\mathbf{x}_0, y)\right) \right\|_2^2.
        \end{align*}}
        \STATE Sample mini-batch from winning responses $(\mathbf{x}', y)$ with batch size $b$.
        \STATE Perturb $\mathbf{x}'_0$ using forward transition $q(\mathbf{x}'_T |\mathbf{x}'_0)$.
        \STATE Compute consistency loss:
        \STATE {\small
        \begin{align*}
        \mathcal{L}_{\text{consistence}} = \frac{1}{b} \sum_{\mathbf{x}'_0, \mathbf{x}'_T, y} 
        \Big\| s(\mathbf{x}'_T, y, T) + \nabla_{\mathbf{x}'_T} \log h_{\boldsymbol{\psi}}(\mathbf{x}'_T, y)        - \nabla_{\mathbf{x}'_T} \log q(\mathbf{x}_T | \mathbf{x}'_0, y) \Big\|_2^2.
        \end{align*}}
        \STATE Update $\boldsymbol{\psi}$ via gradient descent:
        \begin{align*}
        \nabla_{\boldsymbol{\psi}} \left( \mathcal{L}_{\text{guidance}} + \eta \ \mathcal{L}_{\text{consistence}} \right).
        \end{align*}
        \UNTIL{convergence}
        \STATE \textbf{return} weights of guidance network $\boldsymbol{\psi}$.
    \end{algorithmic}
\end{algorithm}

\subsection{Ablation Study on Hyperparameter}
In this subsection, we provide the ablation study of the strength of the regularization $\eta$ and the strength of the reward function $\beta$ in the following table.

\begin{table}[h]
\centering
\caption{Ablation study of hyperparameter on PickScore.}
\vspace{5pt}
\begin{tabular}{c|c|c|c}
\toprule 
$\eta$ & $\beta=10$ & $\beta=15$ & $\beta=20$ \\
\midrule 
0.1 & 22.82 & 22.79 & 22.72 \\
\midrule 
0.5 & 22.78 & 23.01 & 22.79 \\
\midrule 
1   & 22.76 & \textbf{23.08} & 22.84 \\
\bottomrule 
\end{tabular}
\end{table}

\subsection{Prompts for Figure in Main Paper}\label{ap:prompt}

\begin{table}[h]
\caption{Prompts used to generate Figure \ref{fig:f1}.} 
\centering
\resizebox{\textwidth}{!}{ 
\begin{tabular}{|l|c|}
\hline
Image & Prompt \\
\hline
Col1 & Saturn rises on the horizon. \\
\hline
Col2 & a watercolor painting of a super cute kitten wearing a hat of flowers \\
\hline
Col3 & \begin{tabular}[c]{@{}l@{}}A galaxy-colored figurine floating over the sea at sunset, photorealistic.\end{tabular} \\
\hline
Col4 & \begin{tabular}[c]{@{}l@{}}fireclaw machine mecha animal beast robot of horizon forbidden west horizon zero dawn bioluminiscence, behance hd by jesper ejsing, by\\
rhads, makoto shinkai and lois van baarle, ilya kuvshinov, rossdraws global illumination\end{tabular} \\
\hline
Col5 & \begin{tabular}[c]{@{}l@{}}A swirling, multicolored portal emerges from the depths of an ocean of coffee, with waves of the rich liquid gently rippling outward. The \\
portal engulfs a coffee cup, which serves as a gateway to a fantastical dimension. The surrounding digital art landscape reflects the colors of \\
the portal, creating an alluring scene of endless possibilities.\end{tabular} \\
\hline
Col6 & A profile picture of an anime boy, half robot, brown hair \\
\hline
Col7 & \begin{tabular}[c]{@{}l@{}}Detailed Portrait of a cute woman vibrant pixie hair by Yanjun Cheng and Hsiao-Ron Cheng and Ilya Kuvshinov, medium close up, portrait \\
photography, rim lighting, realistic eyes, photorealism pastel, illustration\end{tabular} \\
\hline
Co18 & \begin{tabular}[c]{@{}l@{}}On the Mid-Autumn Festival, the bright full moon hangs in the night sky. A quaint pavilion is illuminated by dim lights, resembling a \\
beautiful scenery in a painting. Camera type: close-up. Camera lens type: telephoto. Time of day: night. Style of lighting: bright. Film type: \\
ancient style. HD.\end{tabular} \\
\hline
\end{tabular}
}
\end{table}

\end{document}